\documentclass[11pt, letterpaper, logo, onecolumn, copyright]{main}

\usepackage[authoryear, sort&compress, round]{natbib}

\usepackage[most, breakable, skins]{tcolorbox}
\tcbuselibrary{skins}
\usepackage{lipsum}
\usepackage{tabularx}
\usepackage{afterpage}
\usepackage{booktabs}
\usepackage{xurl}
\usepackage{subcaption}
\usepackage{makecell}
\usepackage{multirow}
\usepackage{bm}
\usepackage{multicol}
\usepackage{array}
\usepackage{float}
\usepackage{listings}
\usepackage{listings-rust}
\usepackage{fontawesome5}
\usepackage{graphicx}
\usepackage{longtable}
\usepackage{pdflscape}
\usepackage{adjustbox}
\usepackage{CJKutf8}
\usepackage{ragged2e}
\usepackage{colortbl}
\usepackage{enumitem}
\usepackage{bbm}
\usepackage{pifont}
\usepackage{hyphenat}
\usepackage{circledsteps}
\usepackage{diagbox}
\usepackage{textcomp}
\usepackage{mathtools}
\usepackage{threeparttable}
\usepackage{pgfplotstable}
\usepackage{pgfplots}
\pgfplotsset{compat=1.18}

\usepackage{algorithm}
\usepackage{algpseudocode}

\newtheorem{theorem}{Theorem}
\newtheorem{proposition}{Proposition}

\theoremstyle{definition}
\newtheorem{definition}{Definition}

\theoremstyle{remark}

\usepackage{wrapfig}
\usepackage{tikz}
\usepackage[normalem]{ulem}

\definecolor{lightblue}{RGB}{210, 220, 250}
\definecolor{medgray55}{gray}{0.55}
\definecolor{medgray}{gray}{0.7}
\definecolor{litegray}{gray}{0.9}
\definecolor{gblue}{RGB}{210, 227, 252}
\definecolor{gred}{RGB}{250, 210, 207}
\definecolor{gyellow}{RGB}{254, 239, 195}
\definecolor{ggreen}{RGB}{206, 234, 214}
\definecolor{gorange}{RGB}{254, 223, 200}
\definecolor{gblue9}{RGB}{23, 78, 166}
\definecolor{gred9}{RGB}{165, 14, 14}
\definecolor{gyellow9}{RGB}{227, 116, 0}
\definecolor{ggreen9}{RGB}{13, 101, 45}
\definecolor{gorange9}{RGB}{176, 96, 0}
\definecolor{myblue}{rgb}{0,0,1}
\definecolor{myred}{rgb}{1,0,0}
\definecolor{mylightgray}{gray}{0.95}
\definecolor{myCite}{HTML}{1C4587}
\definecolor{highlightblue}{HTML}{185ABC}
\definecolor{cellHighlight}{HTML}{dbefff}
\definecolor{lightgray}{RGB}{211, 211, 211}
\definecolor{lightfont}{gray}{0.3}

\definecolor{lightblue1}{rgb}{0.97, 0.985, 1}
\definecolor{lightblue2}{rgb}{0.92, 0.965, 1}
\definecolor{lightblue3}{rgb}{0.84, 0.93, 1}
\definecolor{lightblue4}{rgb}{0.74, 0.87, 1}
\definecolor{lightblue5}{rgb}{0.64, 0.81, 1}
\definecolor{lightblue6}{rgb}{0.54, 0.75, 1}
\definecolor{lightgreen1}{rgb}{0.97, 1.00, 0.97}
\definecolor{lightgreen2}{rgb}{0.92, 0.98, 0.92}
\definecolor{lightgreen3}{rgb}{0.84, 0.95, 0.84}
\definecolor{lightgreen4}{rgb}{0.74, 0.91, 0.74}
\definecolor{lightgreen5}{rgb}{0.64, 0.86, 0.64}
\definecolor{lightgreen6}{rgb}{0.54, 0.81, 0.54}
\definecolor{lightorange1}{rgb}{1.00, 0.98, 0.95}
\definecolor{lightorange2}{rgb}{1.00, 0.95, 0.85}
\definecolor{lightorange3}{rgb}{1.00, 0.90, 0.70}
\definecolor{lightorange4}{rgb}{1.00, 0.85, 0.55}
\definecolor{lightorange5}{rgb}{1.00, 0.80, 0.40}
\definecolor{lightorange6}{rgb}{1.00, 0.75, 0.30}
\definecolor{lightpurple1}{rgb}{0.985, 0.97, 1.00}
\definecolor{lightpurple2}{rgb}{0.96, 0.92, 1.00}
\definecolor{lightpurple3}{rgb}{0.93, 0.84, 1.00}
\definecolor{lightpurple4}{rgb}{0.87, 0.74, 1.00}
\definecolor{lightpurple5}{rgb}{0.81, 0.64, 1.00}
\definecolor{lightpurple6}{rgb}{0.75, 0.54, 1.00}
\definecolor{lightred1}{rgb}{1.00, 0.97, 0.97}
\definecolor{lightred2}{rgb}{1.00, 0.92, 0.92}
\definecolor{lightred3}{rgb}{1.00, 0.84, 0.84}
\definecolor{lightred4}{rgb}{1.00, 0.74, 0.74}
\definecolor{lightred5}{rgb}{1.00, 0.64, 0.64}
\definecolor{lightred6}{rgb}{1.00, 0.54, 0.54}
\definecolor{lightcyan1}{rgb}{0.97, 1.00, 1.00}
\definecolor{lightcyan2}{rgb}{0.92, 0.98, 0.98}
\definecolor{lightcyan3}{rgb}{0.84, 0.95, 0.96}
\definecolor{lightcyan4}{rgb}{0.74, 0.91, 0.94}
\definecolor{lightcyan5}{rgb}{0.64, 0.87, 0.92}
\definecolor{lightcyan6}{rgb}{0.54, 0.83, 0.90}

\usepackage{minitoc}

\noptcrule

\newcolumntype{L}[1]{>{\raggedright\let\newline\\\arraybackslash\hspace{0pt}}m{#1}}
\newcolumntype{C}[1]{>{\centering}m{#1}}
\newcolumntype{R}[1]{>{\raggedleft\let\newline\\\arraybackslash\hspace{0pt}}m{#1}}

\definecolor{ao}{rgb}{0.0, 0.0, 1.0}

\makeatletter
\renewcommand\subparagraph{%
 \@startsection {subparagraph}{5}{\z@ }{3.25ex \@plus 1ex \@minus .2ex}{-1em}{\normalfont\normalsize\bfseries}}
\makeatother

\let\cite\citep
\hypersetup{citecolor=myCite, linkcolor=myCite, urlcolor=myCite}
\usepackage[capitalise, nameinlink]{cleveref}

\newcommand{\methodname}{\textsc{Papo}}

\newcommand{\myheaderbreak}{\\}

\title{Entropy Polarity in Reinforcement Fine-Tuning:\myheaderbreak Direction, Asymmetry, and Control}

\correspondingauthor={Tao Gui; Fei Huang}

\author{%
  Jiazheng Zhang\textsuperscript{\rm $*$},
  Ziche Fu\textsuperscript{\rm $*$},
  Junrui Shen\textsuperscript{\rm $*$},
  Yunbin Zhao\textsuperscript{\rm $*$},
  Yunke Zhang\textsuperscript{\rm $*$},
  Zhiheng Xi, 
  Long Ma,
  Chenxin An,
  Zhihao Zhang,
  Shichun Liu,
  Dingwei Zhu,
  Shihan Dou,
  Shaofan Liu,
  Han Li, 
  Wiggin Zhou,
  Aiden Adams,
  Tao Gui\textsuperscript{\rm $\dag$},
  Fei Huang\textsuperscript{\rm $\dag$},
  Qi Zhang\textsuperscript{\rm $\dag$},
  Xuanjing Huang\textsuperscript{\rm $\dag$} \\
  \vspace{0.3cm}
  \normalsize
  Fudan NLP Group, Honor Device Co., Ltd \\
  University of Hong Kong, Shanghai Jiao Tong University and Tencent Hunyuan\\
  \texttt{jzzhang24@m.fudan.edu.cn,\{tgui,qz,xjhuang\}@fudan.edu.cn}
}

\vspace{-50pt}
\begin{abstract}
Policy entropy has emerged as a fundamental measure for understanding and controlling exploration in reinforcement learning with verifiable rewards (RLVR) for LLMs.
However, existing entropy-aware methods mainly regulate entropy through global objectives, while the token-level mechanism by which sampled policy updates reshape policy entropy remains underexplored.
In this work, we develop a theoretical framework of entropy mechanics in RLVR.
Our analysis yields a first-order approximation of the entropy change $\Delta \mathcal{H}$, giving rise to entropy polarity, a signed token-level quantity that predicts how much a sampled update expands or contracts entropy.
This analysis further reveals a structural asymmetry: reinforcing frequent high-probability tokens triggers contraction tendencies, whereas expansive tendencies typically require lower-probability samples or stronger distributional correction.
Empirically, we show that entropy polarity reliably predicts entropy changes, and that positive and negative polarity branches play complementary roles in preserving exploration while strengthening exploitation.
Building on these insights, we propose \textbf{P}olarity-\textbf{A}ware \textbf{P}olicy \textbf{O}ptimization (\textbf{\methodname{}}), which preserves both polarity branches and implements entropy control through advantage reweighting.
With the empirical entropy trajectory as an online phase signal, \methodname{} adaptively reallocates optimization pressure between entropy-expanding and entropy-contracting updates.
Experiments on mathematical reasoning and agentic benchmarks show that \methodname{} consistently outperforms competitive baselines, while delivering superior training efficiency and substantial reward improvements.
\end{abstract}

\begin{document}
\begingroup
  \renewcommand\thefootnote{}
  \footnote{\hspace{-1.8em}\textsuperscript{$*$}\,Equal contribution.\quad\textsuperscript{$\dag$}\,Corresponding authors: Tao Gui, Fei Huang, Qi Zhang, Xuanjing Huang.}
\endgroup
\doparttoc
\faketableofcontents

\vspace{-30pt}
\maketitle
\renewcommand{\myheaderbreak}{ }

\section{Introduction}
Reinforcement fine-tuning (RFT) has substantially improved the capabilities of large language models (LLMs)~\citep{Sutton:1998:RL,Guo:2025:R1,Jaech:2024:O1,Xi:2026:RFT}, especially on complex tasks such as mathematics~\citep{Shao:2024:GRPO}, coding~\citep{ClaudeCode}, and agentic decision-making~\citep{Xi:2025:AgentGymRL,Feng:2025:GiGPO,zhang2026agentvrlscalingrewardmodeling}.
A pivotal post-training paradigm driving these gains is Reinforcement Learning with Verifiable Rewards (RLVR)~\citep{Guo:2025:R1,Zhang:2025:ReasoningSurvey, An:2025:Polaris}, where model behavior is optimized against task-specific objectives through automated correctness verification~\citep{zhang:2026:agentv}.
Despite the rapid progress of RLVR, the exploration and exploitation trade-off remains a central challenge for RFT: excessive exploitation can prematurely narrow the policy and limit further discovery, whereas insufficient exploitation can slow or destabilize learning~\citep{Liu:2021:ExplorationExploitation,Tang:2025:SamplePolarity, Huang:2026:DirectionofRLVR}.

As a practical proxy for policy behavior, entropy has become a critical proxy for understanding RLVR~\citep{Cui:2025:EntropyMechanism, Cheng:2026:EntropyAAAI, Chen:2025:RethinkEntropy}.
Recent work has begun to analyze RLVR from this perspective, highlighting the role of high-entropy tokens and exploratory reasoning dynamics~\citep{Wang:2025:8020,Petrenko:2026:EntropyPreserveRL}.
Complementary work has proposed a variety of entropy-aware interventions to mitigate entropy collapse, including entropy regularization~\citep{Prabhudesai:2025:EntropyReg}, clamped bonuses~\citep{Shen:2025:AEnt}, and dynamic clipping~\citep{Xi:2026:BAPO,Chen:2026:ClipControl} for entropy control.
However, an in-depth understanding of entropy mechanics remains unresolved: beyond identifying high-entropy regions or designing heuristic interventions, it is still unclear how policy-gradient updates reshape policy entropy along experience trajectories, how these effects evolve over training, and to what extent they drive RLVR's performance gains.

To this end, we develop a token-level theory of entropy mechanics for RLVR.
Instead of viewing entropy as a global training statistic alone, our framework characterizes the microscopic mechanism linking policy gradient to logit perturbation, next-token probability redistribution, and subsequent policy entropy variation.
At context state \(s_t\), we characterize the policy behavior via the token-level entropy change:
\begin{equation}
\label{eq:delta-H-intro}
\Delta \mathcal{H}_t \;=\; \mathcal{H}\!\left(\pi_{\theta}^{k+1}(\cdot \mid s_t)\right) - \mathcal{H}\!\left(\pi_{\theta}^{k}(\cdot \mid s_t)\right),
\end{equation}
whose sign indicates whether it expands or contracts the policy's exploratory capacity.
We show that \(\Delta \mathcal{H}_t\) admits a first-order decomposition into a sampled-token contribution and a state-wise distributional correction.
This gives rise to a computable signed quantity, \textbf{\emph{entropy polarity}}, that predicts whether and how much reinforcing a given token will increase or decrease local entropy.
The polarity view further reveals an asymmetric property: reinforcing high-probability continuations tends to sharpen the distribution, whereas entropy expansion requires lower-probability samples or stronger distributional correction, explaining why entropy collapse is easier to trigger than to reverse.

Building on the theoretical analysis above, we further connect entropy polarity to practical RLVR training.
Section~\ref{sec:theory_verification} demonstrates that entropy polarity reliably predicts measured entropy changes between consecutive steps, confirming it as a behaviorally meaningful token-level signal.
Section~\ref{sec:empirical_direction} further shows that positive and negative-polarity updates induce distinct dynamics: the former preserve exploratory capacity, while the latter strengthen reward-aligned exploitation.
Since neither branch alone is sufficient for stable RL in Figure~\ref{fig:polarity_directional_control}, an effective RFT strategy should therefore keep both branches while controlling their relative influence over training instead of favoring one entropy direction or treating them uniformly.

To this end, we propose \methodname{}, \textbf{Polarity-Aware Policy Optimization} framework that turns entropy polarity from an analytical quantity into a control signal for RFT. 
Rather than regulating entropy uniformly, \methodname{} directly manages the competition between entropy-expanding and entropy-contracting updates, dynamically redistributing optimization pressure between the two branches according to their effective polarity and the evolving entropy trajectory. In this way, the method preserves exploratory capacity when entropy collapse is imminent, while restoring reward-aligned exploitation when training needs to consolidate useful behaviors. 
Extensive experiments validate the effectiveness of \methodname{} spanning mathematics reasoning and agentic tool-calling. 
Our method consistently outperforms baselines on both in-distribution and out-of-distribution benchmarks.
Overall, our main contributions are:
\begin{itemize}[leftmargin=*, itemsep=2pt, topsep=2pt]
\item We develop a token-level theoretical framework for entropy mechanics in RLVR. This leads to entropy polarity, a computable signed quantity that predicts whether and how much a sampled update tends to expand or contract policy entropy.
\item We demonstrate that entropy polarity reliably predicts entropy change in empirical study, making it a behaviorally meaningful signal. Positive and negative polarity play complementary roles: the former preserve exploration, whereas the latter promote exploitation.
\item We propose \methodname{}, an efficient RL method that turns entropy polarity into an online signal for entropy control. By dynamically reweighting token-level advantages, the proposed method achieves competitive performance on both mathematical reasoning and agentic scenarios.

\end{itemize}

\section{Preliminary}
\label{sec:preliminary}
\paragraph{Group Relative Policy Optimization.} Given a prompt $x$, LLM $\pi_{\theta}$ generates response
$\pi_{\theta}(y\mid x)
=
\prod_{t=1}^{T}
\pi_{\theta}(y_t\mid s_t),$
$s_t := (x,y_{<t})$ denotes the decoding context. 
In GRPO, the rollout policy samples a group of $G$ responses
$\{y^{(i)}\}_{i=1}^{G}$, and each response $y^{(i)}=(y^{(i)}_1,\dots,y^{(i)}_{T_i})$ receives a response-level reward
$r^{(i)}=r(x,y^{(i)})$. It optimizes the clipped surrogate objective
\begin{equation}
\mathcal{J}_{\mathrm{GRPO}}(\theta)
=
\frac{1}{G}
\sum_{i=1}^{G}
\frac{1}{T_i}
\sum_{t=1}^{T_i}
\min
\left(
\rho_{i,t}(\theta) A^{(i)},
\operatorname{clip}\!\left(\rho_{i,t}(\theta),1-\epsilon,1+\epsilon\right) A^{(i)}
\right),
\label{eq:grpo_objective}
\end{equation}
where 
$\rho_{i,t}(\theta)
=
\frac{
\pi_{\theta}(y_t^{(i)}\mid s_t^{(i)})
}{
\pi_{\theta_{\mathrm{old}}}(y_t^{(i)}\mid s_t^{(i)})
}$ is the importance sampling ratio and 
$
A^{(i)}
=
\frac{
r^{(i)}-\mathrm{mean}(\mathbf{R})
}{
\mathrm{std}(\mathbf{R})
}
$ denotes the normalized
advantages within each group.

\paragraph{Policy Entropy.}
In RFT, entropy has become a practical indicator of policy uncertainty.
For a decoding state $s_t$, let $p_v=\pi_\theta(v\mid s_t)$ denote the probability assigned to token $v\in\mathcal{V}$.
The local policy entropy and its change between consecutive steps are defined as
\begin{equation}
\mathcal{H}_t
:=
-\sum_{v\in\mathcal{V}}p_v\log p_v,
\qquad
\Delta\mathcal{H}_t
:=
\mathcal{H}(\pi_{\theta}^{k+1}(\cdot\mid s_t))
-
\mathcal{H}(\pi_{\theta}^{k}(\cdot\mid s_t)).
\label{eq:local_entropy_and_change}
\end{equation}
Here $\pi_\theta^k$ and $\pi_\theta^{k+1}$ denote the policies before and after one optimization step.
A positive $\Delta\mathcal{H}_t$ indicates that the update locally spreads probability mass over alternative continuations, whereas a negative value indicates local concentration around fewer continuations.

\section{Entropy Mechanics}
\label{sec:theory}

We study entropy mechanics through two complementary quantities.
The \emph{intrinsic entropy tendency} \(\mathcal{T}(s_t,y_t)\) captures how reinforcing a sampled token would reshape entropy based on the token and distribution alone, while the \emph{entropy polarity} \(\mathcal{P}(s_t,y_t,A)\) incorporates the signed training signal and captures the realized first-order entropy effect, including both direction and magnitude.

\subsection{A Token-Level View of Entropy Change}
\label{sec:theory_sampled_to_polarity}

We analyze how policy gradient reshapes the next-token entropy. Given a decoding state $s_t$, consider the next-token policy $\pi_\theta(\cdot\mid s_t)$, parameterized by softmax logits $z_{s_t,v}$ over $v\in\mathcal{V}$. Suppose token $y_t\in\mathcal{V}$ is sampled from state $s_t$, and the update induced by policy gradient is from the term $A\log \pi_\theta(y_t\mid s_t)$, where $A\in\mathbb{R}$ is the associated advantage. For compactness, we write
\[
p_v := \pi_\theta(v\mid s_t),
\qquad
p_t := \pi_\theta(y_t\mid s_t),
\qquad
\mathcal{H}_t := \mathcal{H}(\pi_\theta(\cdot\mid s_t)).
\]

\begin{proposition}[Logit Update Under Policy Gradient]
\label{prop:sampled_logit_update}
Under the local softmax policy, one gradient-ascent step on $A\log \pi_\theta(y_t\mid s_t)$ with step size $\eta>0$ gives
$z'_{s_t,v}=z_{s_t,v}+\eta A\bigl(\mathbb{I}[v=y_t]-p_v\bigr)$ for all $v\in\mathcal{V}$.
Equivalently, $z'_{s_t,y_t}=z_{s_t,y_t}+\eta A(1-p_t)$, and
$z'_{s_t,v}=z_{s_t,v}-\eta A p_v$ for $v\neq y_t$.
\end{proposition}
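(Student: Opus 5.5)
The plan is to treat the logits $z_{s_t,\cdot}$ at the fixed state $s_t$ as the free parameters. This is the "local softmax policy" reading of the statement. Under it, gradient ascent on $A\log \pi_\theta(y_t\mid s_t)$ with step size $\eta$ is $z'_{s_t,v}=z_{s_t,v}+\eta\,\partial_{z_{s_t,v}}\bigl(A\log \pi_\theta(y_t\mid s_t)\bigr)$. The whole proof therefore reduces to computing this partial derivative for each $v\in\mathcal{V}$.

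First, write the softmax explicitly: $\pi_\theta(y_t\mid s_t)=\exp(z_{s_t,y_t})/\sum_{u\in\mathcal{V}}\exp(z_{s_t,u})$. This gives
\[
\log \pi_\theta(y_t\mid s_t)=z_{s_t,y_t}-\log\sum_{u\in\mathcal{V}}\exp(z_{s_t,u}).
\]
Second, differentiate with respect to $z_{s_t,v}$. The first term contributes $\mathbb{I}[v=y_t]$. The log-partition term contributes $-\exp(z_{s_t,v})/\sum_u\exp(z_{s_t,u})=-p_v$. Hence $\partial_{z_{s_t,v}}\log\pi_\theta(y_t\mid s_t)=\mathbb{I}[v=y_t]-p_v$. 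Since $A$ is a constant with respect to $\theta$ (the advantage is computed from rewards and treated as detached), multiplying by $A$ and by the step size yields the stated update $z'_{s_t,v}=z_{s_t,v}+\eta A(\mathbb{I}[v=y_t]-p_v)$.

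Third, specialize the indicator. For $v=y_t$ we get $\eta A(1-p_t)$, and for $v\neq y_t$ we get $-\eta A p_v$. This gives the equivalent form in the statement.

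There is essentially no technical obstacle. The only point needing care is interpretive: one must state explicitly that the logits at $s_t$ are updated directly, as in a tabular/local softmax parameterization, rather than through shared network parameters. In a full network the logit change would instead be $\eta A\,J J^\top(\mathbf{e}_{y_t}-\mathbf{p})$ for the Jacobian $J$ of the logits with respect to $\theta$. The proposition corresponds to the case $JJ^\top=I$, and I would remark on this to delimit the scope of the result. It is also worth noting, to support the later first-order entropy analysis, that the update direction $\mathbf{e}_{y_t}-\mathbf{p}$ has components summing to zero. So it is orthogonal to the softmax-invariant direction $\mathbf{1}$, and no part of the step is wasted on a uniform logit shift.
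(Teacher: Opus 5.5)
Your proposal is correct and follows the paper's proof essentially step for step: you write $\log\pi_\theta(y_t\mid s_t)$ as $z_{s_t,y_t}$ minus the log-partition function, differentiate to get $\mathbb{I}[v=y_t]-p_v$, and take one ascent step with the advantage $A$ treated as a constant. Your added remarks are accurate clarifications of scope that the paper leaves implicit: the result holds for the local (tabular) parameterization, i.e.\ the case $JJ^\top=I$, and the update direction has components summing to zero.
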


\begin{theorem}[First-order Entropy Change via Sampled Updates]
\label{thm:sampled_entropy_change}
Under the setup of Proposition~\ref{prop:sampled_logit_update}, the entropy change at state $s_t$ satisfies
\begin{equation}
\Delta\mathcal{H}_t
=
-\eta A\,t_1(s_t,y_t)
+
\eta A\,t_2(s_t)
+
O(\eta^2),
\label{eq:entropy_decomposition}
\end{equation}
where
\begin{equation}
t_1(s_t,y_t):=p_t(\mathcal{H}_t+\log p_t),
\qquad
t_2(s_t):=\sum_{v\in\mathcal{V}}p_v^2(\mathcal{H}_t+\log p_v).
\label{eq:t1_t2_def}
\end{equation}
\end{theorem}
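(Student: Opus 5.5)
We treat the entropy as a function of the logit vector at $s_t$ and Taylor-expand it to first order along the logit displacement from Proposition~\ref{prop:sampled_logit_update}. Write $\delta z_v := z'_{s_t,v}-z_{s_t,v} = \eta A(\mathbb{I}[v=y_t]-p_v)$. Since $\mathcal{H}$ is a smooth function of the logits, the mean value or Taylor theorem gives $\Delta\mathcal{H}_t = \sum_v \frac{\partial \mathcal{H}}{\partial z_v}\,\delta z_v + O(\|\delta z\|^2)$. Because every $|\delta z_v|\le |\eta A|$, the remainder is $O(\eta^2)$, with the constant depending on $A$ and on a bound for the Hessian of $\mathcal{H}$ in logit space. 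That Hessian is bounded by a vocabulary-dependent constant, since it is a polynomial in $p$ and $\log p$ terms, and $p\log p$-type expressions stay bounded.

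The first step is to compute the softmax-entropy gradient. From $\partial p_u/\partial z_v = p_u(\mathbb{I}[u=v]-p_v)$ and $\mathcal{H}=-\sum_u p_u\log p_u$, we get $\partial\mathcal{H}/\partial z_v = -\sum_u (\log p_u+1)\,p_u(\mathbb{I}[u=v]-p_v)$. The constant $+1$ cancels because $\sum_u \partial p_u/\partial z_v=0$. This leaves $\partial\mathcal{H}/\partial z_v = -p_v(\log p_v + \mathcal{H}_t)$, which is the standard ``$-p_v$ times centered log-probability'' form.

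The second step contracts this gradient with $\delta z$:
\[
\sum_v \frac{\partial\mathcal{H}}{\partial z_v}\delta z_v = -\eta A\sum_v p_v(\log p_v+\mathcal{H}_t)(\mathbb{I}[v=y_t]-p_v).
\]
The indicator term gives $-\eta A\,p_t(\mathcal{H}_t+\log p_t)=-\eta A\,t_1(s_t,y_t)$. The $-p_v$ term gives $+\eta A\sum_v p_v^2(\mathcal{H}_t+\log p_v)=\eta A\,t_2(s_t)$. Together these match \eqref{eq:entropy_decomposition}.

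The only step needing care is making the $O(\eta^2)$ remainder rigorous. The second derivatives of $\mathcal{H}$ involve terms like $p_u\log p_u$ and $p_u p_v\log p_u$, which remain bounded on the simplex because $|p\log p|\le e^{-1}$. So for a finite vocabulary we bound the Hessian uniformly along the segment between $z$ and $z'$ and apply Taylor's theorem with Lagrange remainder. If that uniform bound gets messy, we would fall back on stating the remainder as $O(\eta^2)$ for fixed $s_t$, $A$ and $|\mathcal{V}|$, which is all the statement requires.
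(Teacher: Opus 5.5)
Your proposal is correct and follows essentially the same route as the paper. Like the paper, you compute the softmax-entropy gradient $\partial\mathcal{H}/\partial z_v=-p_v(\mathcal{H}_t+\log p_v)$, contract it in a first-order Taylor expansion with the logit displacement from Proposition~\ref{prop:sampled_logit_update}, and split the indicator and $-p_v$ parts into $-\eta A\,t_1$ and $\eta A\,t_2$; your uniform bound on the logit-space Hessian to justify the $O(\eta^2)$ remainder is a bit more careful than the paper, which simply asserts it.
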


\noindent\textit{Proof sketch.}
We derive the result by applying a first-order Taylor expansion to policy entropy, and incorporating the gradient-induced logit update derived in Proposition~\ref{prop:sampled_logit_update}. Under the softmax parametrization,
\begin{equation}
\frac{\partial \mathcal{H}}{\partial z_{s_t,v}}
=
-p_v(\mathcal{H}_t+\log p_v),
\qquad
\delta z_{s_t,v}
=
\eta A\bigl(\mathbb{I}[v=y_t]-p_v\bigr).
\label{eq:entropy_gradient_and_perturbation}
\end{equation}
Substituting $\delta z_{s_t,v}$ into the first-order expansion of $\mathcal{H}$ and collecting the sampled-token term and the distribution term yields Eq.~\eqref{eq:entropy_decomposition}. Full derivations are deferred to Appendix~\ref{app:full_derivations}.

Theorem~\ref{thm:sampled_entropy_change} shows that the first-order entropy change induced by a sampled token update factorizes into the signed training signal \(A\) and an \(A\)-independent token-state quantity \(-t_1(s_t,y_t)+t_2(s_t)\). We therefore take this combination as the mechanism-level object of our analysis.

\begin{definition}[Intrinsic Entropy Tendency]
\label{def:intrinsic_entropy_tendency}
For a sampled token-state pair \((s_t,y_t)\), we define
\begin{equation}
\mathcal{T}(s_t,y_t):=-t_1(s_t,y_t)+t_2(s_t).
\label{eq:structural_polarity}
\end{equation}
\end{definition}

Here, \(\mathcal{T}\) measures the entropy tendency induced by positively reinforcing \(y_t\) at state \(s_t\):
\(\mathcal{T}>0\) indicates entropy expansion, while \(\mathcal{T}<0\) indicates entropy contraction.
Because it is independent of the signed advantage, \(\mathcal{T}\) characterizes the mechanism-level structure before the actual RL update direction is applied.

\begin{figure}[t]
    \centering
    \includegraphics[width=1.0\linewidth]{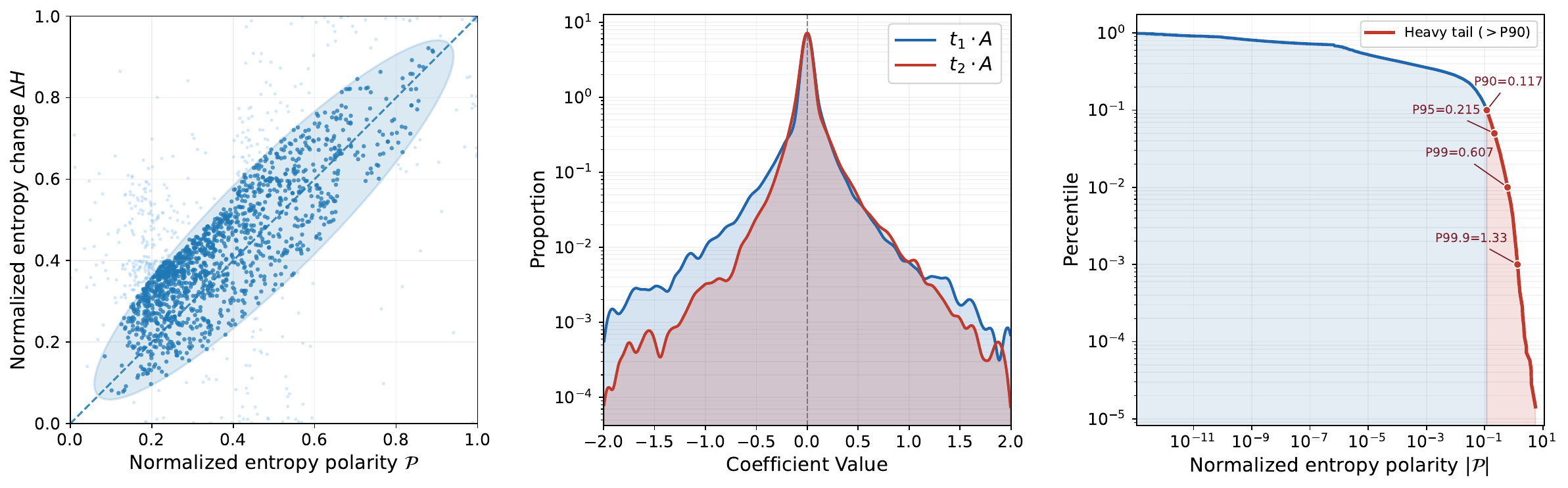}
    \caption{
    (a)~Correlation between polarity score $\mathcal{P}$ and measured entropy change $\Delta \mathcal{H}$.
    (b)~Histograms for the two decomposed components derived from $\mathcal{P}$.
    (c)~Percentile curve of the polarity score.
    }
    \label{fig:polarity_structure}
\end{figure}

\subsection{Intrinsic Entropy Tendency and Asymmetry}
\label{sec:polarity_components}

Recall that the intrinsic entropy tendency is
\(\mathcal{T}(s_t,y_t)=-t_1(s_t,y_t)+t_2(s_t)\), which combines a sampled-token effect and a state-wise distributional correction. 
The distributional term \(t_2(s_t)\) is shared by all sampled tokens at the same state and acts as a correction toward entropy expansion; it is always non-negative, as it can be expressed as the covariance between token probabilities and their log-probabilities.
The sampled-token term enters with a negative sign: since \(t_1(s_t,y_t)=p_t(\mathcal{H}_t+\log p_t)\), its sign is governed by the threshold \(p_t=\exp(-\mathcal{H}_t)\). 
Thus, low-probability tokens make \(-t_1\) expansion-aligned, whereas high-probability tokens make \(-t_1\) contractive.

This sign structure explains the asymmetry of \(\mathcal{T}\). 
For low-probability samples, \(-t_1\) and \(t_2\) jointly favor expansion. 
For high-probability samples, \(-t_1\) competes against the positive correction \(t_2\), and contraction is triggered when the token-specific concentration effect dominates. 
Because policy sampling encounters high-probability tokens more often, reinforcing already dominant tokens tends to further concentrate the local distribution, making contractive tendencies easier to trigger. 
Conversely, expansion typically requires lower-probability samples or a sufficiently large correction term.

This asymmetry is structural: it is already encoded in \(\mathcal{T}\) before the signed training signal is applied; detailed sign analysis and sampled-scale structure are deferred to Appendix~\ref{app:polarity_analysis}.
However, practical RL updates are signed and scaled by the advantage, so structural tendency alone does not yet determine the realized first-order entropy effect of an update.

\subsection{Entropy Polarity and Empirical Insights}
\label{sec:theory_verification}

\paragraph{From tendency to polarity.}
Because actual RL updates are signed by the advantage, we define the entropy polarity of a sampled update as
\begin{equation}
\mathcal{P}(s_t,y_t,A):=A\,\mathcal{T}(s_t,y_t),
\qquad
\Delta\mathcal{H}_t
=
\eta\,\mathcal{P}(s_t,y_t,A)
+
O(\eta^2).
\label{eq:polarity_and_entropy_change}
\end{equation}
Thus, \(\mathcal{P}\) is the realized first-order entropy effect: positive entropy polarity corresponds to an entropy-expanding update, whereas negative entropy polarity corresponds to an entropy-contracting update.

\paragraph{Entropy polarity tracks measured entropy change.}
Theorem~\ref{thm:sampled_entropy_change} predicts that the signed first-order entropy change is governed by the entropy polarity \(\mathcal{P}=A\mathcal{T}\). As shown in Figure~\ref{fig:polarity_structure}(a), normalized \(\mathcal{P}\) aligns closely with the measured local entropy movement during RL training, confirming that entropy polarity is a practical token-level predictor of entropy direction under sampled updates. This agreement suggests that \(\mathcal{P}\) is not merely an analytical artifact, but a reliable signal for analyzing how sampled updates shape local entropy.

\paragraph{Polarity effects are structured and non-uniform.}
Figure~\ref{fig:polarity_structure}(b) plots the two signed components \(t_1\cdot A\) and \(t_2 \cdot A\) that enter the first-order entropy change.
Their substantial overlap indicates that, under practical RLVR training, both the sampled-token contribution and the state-wise distributional correction materially shape the realized entropy effect.
Figure~\ref{fig:polarity_structure}(c) further shows a long-tailed magnitude distribution of \(\mathcal{P}\), where a small fraction of high-magnitude updates accounts for a large share of the cumulative entropy effect.

Together, these results support a two-level view of entropy mechanics:
\(\mathcal{T}\) captures the intrinsic entropy tendency of a sampled token-state pair, while \(\mathcal{P}\) incorporates the signed advantage and captures the realized first-order entropy effect---both direction and magnitude---under actual RL updates.

\begin{figure}[t]
    \centering
    \includegraphics[width=\linewidth]{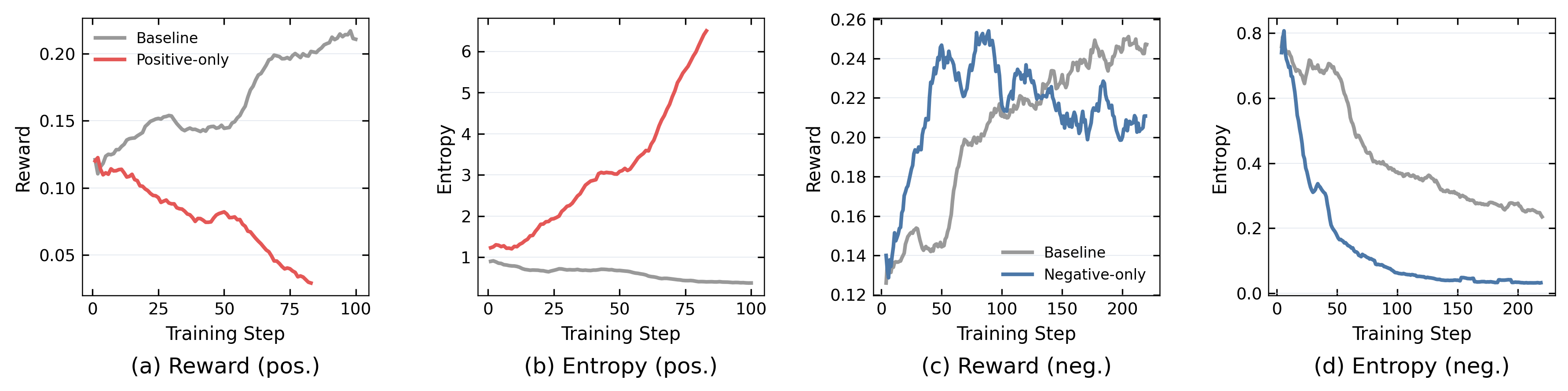}
    \caption{
    \textbf{Ablation study of entropy polarity branches.}
    We retain exclusively positive or negative polarity tokens to compare training dynamics.
    Positive-polarity updates boost entropy but impair reward learning, whereas negative-polarity updates obtain significant reward improvements alongside unstable optimization and declining entropy.
    }
    \label{fig:polarity_directional_control}
    \vspace{-1.5em}
\end{figure}

\section{Method}
\label{sec:method}

\subsection{Motivation: Expansion and Contraction Branches}
\label{sec:empirical_direction}

\paragraph{Positive and negative entropy polarity induce opposite training dynamics.}
To explore the effect of polarity sign in RL, we conduct a single-polarity ablation of RL (Details are presented in Appendix~\ref{app:single_polarity}).
Figure~\ref{fig:polarity_directional_control} shows that retaining only positive polarity updates preserves or expands entropy but weakens reward, whereas retaining only negative polarity updates improves reward early but rapidly contracts entropy and reduces later exploration.
This indicates that entropy polarity separates entropy-expanding and entropy-contracting branches, and that neither branch alone supports stable reinforcement fine-tuning.

\paragraph{Characterizing the functional roles of entropy-polarity branches.}
\begin{wrapfigure}{r}{0.48\linewidth}
    \vspace{-3em}
    \centering
    \includegraphics[width=0.98\linewidth]{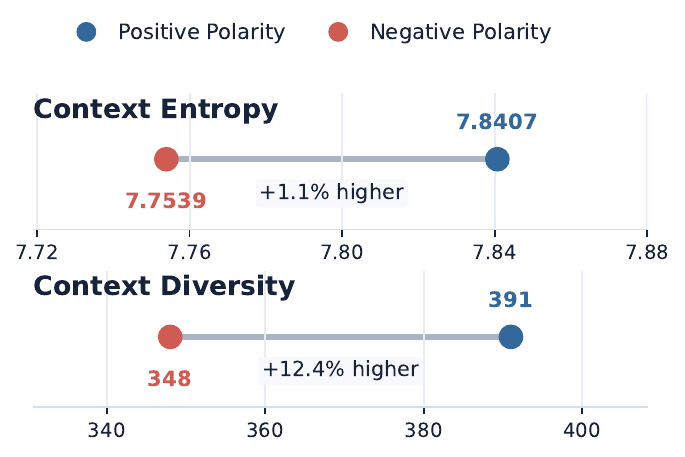}
    \caption{
    Contextual statistics of positive and negative polarity.
    }
    \label{fig:polarity_context_stats}
    \vspace{-1.3em}
\end{wrapfigure}

To understand this distinction, we inspect tokens from each polarity branch.
Quantitatively, Figure~\ref{fig:polarity_context_stats} shows that positive-polarity tokens are associated with both higher context entropy and greater context diversity, indicating that they tend to occur in more open next-token distributions.
We further examine their lexical profiles in Figure~\ref{fig:polarity_wordcloud}.
Positive-polarity tokens cluster around connective and structural tokens that keep continuations open, whereas negative-polarity tokens concentrate on formulaic reasoning fragments and answer-oriented transitions in more constrained contexts.
More discussion about the position concentration of entropy polarity is supplemented in Appendix~\ref{app:polarity_position}.
Together, these observations suggest that positive-polarity updates help preserve action diversity, while negative-polarity updates sharpen reward-aligned exploitation.

\paragraph{Motivation for branch-level control.}
Taken together, these insights indicate complementary roles: \textbf{the positive polarity branch preserves exploratory capacity, whereas the negative polarity branch strengthens exploitation}.
An effective RFT strategy should therefore keep both branches while controlling their relative influence over training, rather than favoring one entropy direction or treating them uniformly, as in GRPO.
Since the desired exploration--exploitation balance changes over time, this control should be adaptive rather than fixed.

\subsection{Polarity-Aware Policy Optimization}
\label{sec:method_papo}

To this end, we propose \textbf{P}olarity-\textbf{A}ware \textbf{P}olicy \textbf{O}ptimization (\methodname{}), which performs branch-level control by reweighting token advantages according to entropy polarity. Rather than targeting a fixed entropy level, \methodname{} uses the empirical entropy trajectory as an online phase signal to redistribute optimization pressure between entropy-expanding and entropy-contracting updates, protecting exploration when entropy declines rapidly and strengthening exploitation once the trajectory stabilizes.

\paragraph{General Formulation.}
We formalize this design by modulating the token advantage. For training step $k$, the polarity-weighted advantage $\tilde{A}_t^{(i)}$ is defined as:
\begin{equation}
\tilde{A}_t^{(i)} =
\begin{cases}
A^{(i)} \cdot \omega_{\mathrm{pos}}(k), & \mathcal{P}(s_t^{(i)},y_t^{(i)},A^{(i)})\geq 0, \\ 
A^{(i)} \cdot \omega_{\mathrm{neg}}(k), & \mathcal{P}(s_t^{(i)},y_t^{(i)},A^{(i)})<0,
\end{cases}
\label{eq:reweight}
\end{equation}
Here $A^{(i)}$ is the group-normalized advantage, while the branch assignment is token-specific through the sign of $\mathcal{P}(s_t^{(i)},y_t^{(i)},A^{(i)})$.
This formulation preserves the original reward objective and recovers GRPO when $\omega_{\mathrm{pos}} = \omega_{\mathrm{neg}} = 1$.

\begin{figure}[t]
    \centering
    \includegraphics[width=1.0\linewidth]{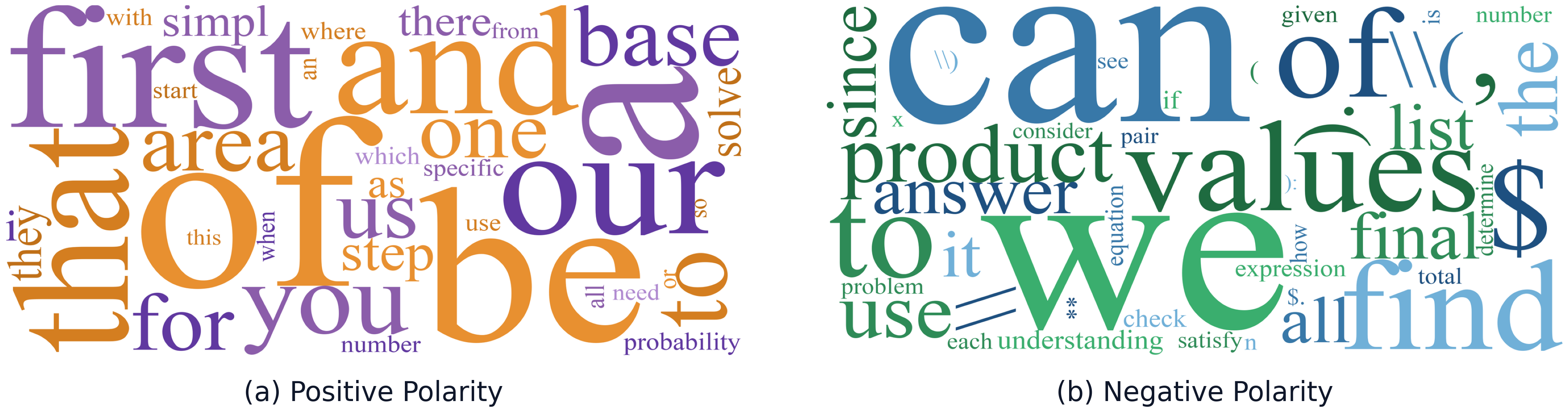}
    \caption{
    Word clouds of positive and negative polarity branches with the high absolute \(\|\mathcal{P}\|\) polarity values, respectively.
    The positive branch (\(\mathcal{P}>0\)) is enriched in connective and structural expressions, e.g., ``and'', ``that'', ``first'', and ``step'', consistent with more open local continuations; the negative branch (\(\mathcal{P}<0\)) instead concentrates on formulaic reasoning and answer-oriented expressions, e.g., ``find'', ``values'', ``product'', and ``final'', consistent with more constrained solution trajectories.
    }
    \vspace{-1.5em}
    \label{fig:polarity_wordcloud}
\end{figure}

\paragraph{Adaptive Entropy Modulation.}

Since the desired exploration--exploitation balance depends on the training dynamics, fixed polarity weights are generally suboptimal.
Let $s_k \triangleq \mathrm{EMA}_{\beta}(\Delta H)$ denote the smoothed entropy slope,
\methodname{} uses the empirical entropy trajectory through $s_k$ as an online phase signal.
After a short warmup with neutral weights, we record the warmup slope and entropy as the reference rate $s_{\mathrm{ref}}<0$ and reference level $h_{\mathrm{ref}}$, respectively.

By rescaling the current slope against $s_{\mathrm{ref}}$, we instantiate a normalized progress metric $p_k$ that measures how much the current entropy decay has recovered from the warmup contraction rate:
\begin{equation}
p_k = \mathrm{clip} \left( \frac{s_k - s_{\mathrm{ref}}}{-s_{\mathrm{ref}} + \varepsilon}, 0, 1 \right).
\label{eq:recovery}
\end{equation}
Smaller $p_k$ indicates ongoing entropy collapse, while larger $p_k$ indicates recovery. We then map $p_k$ to polarity weights through a quadratic rule with reciprocal coupling:
\begin{equation}
\omega_{\mathrm{neg}}(k) = \omega_{\min} + (\omega_{\max} - \omega_{\min}) p_k^2, \qquad
\omega_{\mathrm{pos}}(k) = \frac{1}{\omega_{\mathrm{neg}}(k)}.
\label{eq:omega_logic}
\end{equation}
When $p_k \approx 0$, the controller protects exploration by suppressing entropy-contracting updates and strengthening entropy-expanding ones; as $p_k$ increases, this bias relaxes toward neutral weighting and can mildly favor exploitation.
The quadratic mapping keeps the response conservative, the reciprocal coupling avoids global learning-rate confounding, and a gate at $\gamma_{\mathrm{gate}}\cdot h_{\mathrm{ref}}$ disables reweighting in the extreme low-entropy regime.
The pseudocode of \methodname{} is illustrated in Algorithm~\ref{alg:method}.

\section{Experiments}
\label{sec:exp}
\subsection{Experimental Setup}
\paragraph{Datasets and Models.} 
For the main mathematical reasoning experiments, we train on DAPO-Math-17K~\citep{Yu:2026:DAPO}, a widely adopted high-quality RLVR dataset. 
We use Qwen2.5-7B-Base and Qwen2.5-14B-Base~\citep{qwen2} as base models, ensuring consistency with baseline comparisons.
Additional agentic reasoning experiments use a separate training setup, described in Section~\ref{sec:analysis}.

\paragraph{Evaluation.}

To comprehensively assess the model capabilities, we employ a diverse suite of benchmarks encompassing mathematical reasoning, code generation, and general multi-task proficiency.
Specifically: (1) Mathematical reasoning: OlympiadBench~\citep{He:2024:olympiadbench}, AMC~\citep{Li:2024:AMC}, MATH500~\citep{Lightman:2024:MATH500}, and AIME~\citep{Zhang:2024:AIME}; 
(2) Code generation: CRUX~\citep{Gu:2024:CRUX}; (3) Instruction-following and general abilities: IFEval~\citep{Zhou:2023:IFEval} and MMLU-Pro~\citep{Wang:2024:MMLU}.

\paragraph{Implementation Details.}
We implement our algorithms based on the VeRL framework and conduct experiments on 16 $\times$ H20 GPUs.
During RL training, the learning rate is set to $1\times10^{-6}$, with a global
batch size of 128, and gradient accumulation over 64 steps.
For each group, 8 candidate responses are sampled.
More implementation illustrations are in Appendix~\ref{app:math_config}.

\subsection{Main Results}

\begin{table}[t!]
    \centering
    \caption{\textbf{Performance (\%) on in-domain and out-of-domain benchmarks.} Cells with deeper background color correspond to better performance within each model group.}
    \vspace{-0mm}
    \label{Tble:Main_Exp_PAPO}
    \resizebox{\textwidth}{!}{
    \begin{tabular}{l|cccccc|ccc}
    \toprule[1.6pt]
        & \multicolumn{6}{c|}{\textbf{In-Domain Performance}} 
        & \multicolumn{3}{c}{\textbf{Out-of-Domain}} \\ 
        \cmidrule{2-7} \cmidrule{8-10}
        \textbf{Methods} 
        & \textbf{Olympiad} 
        & \textbf{AMC} 
        & \textbf{AIME24} 
        & \textbf{AIME25} 
        & \textbf{MATH} 
        & \textbf{Minerva} 
        & \textbf{CRUX} 
        & \textbf{IFEval} 
        & \textbf{MMLU-Pro} \\

        \midrule
            \multicolumn{10}{c}{\textit{\textbf{Qwen2.5-7B-Base}}} \\
        \midrule
        - Base 
        & \cellcolor{lightcyan1}{22.30} 
        & \cellcolor{lightcyan1}{31.56} 
        & \cellcolor{lightcyan1}{4.00} 
        & \cellcolor{lightcyan1}{2.00} 
        & \cellcolor{lightcyan1}{50.90} 
        & \cellcolor{lightcyan1}{17.10} 
        & \cellcolor{lightcyan1}{16.25} 
        & \cellcolor{lightcyan1}{27.54} 
        & \cellcolor{lightcyan2}{36.98} \\

        - DAPO 
        & \cellcolor{lightcyan4}{33.12} 
        & \cellcolor{lightcyan3}{49.38} 
        & \cellcolor{lightcyan3}{13.75} 
        & \cellcolor{lightcyan4}{7.50} 
        & \cellcolor{lightcyan2}{65.57} 
        & \cellcolor{lightcyan3}{24.31} 
        & \cellcolor{lightcyan4}{30.63} 
        & \cellcolor{lightcyan3}{28.28} 
        & \cellcolor{lightcyan5}{37.81} \\

        - Ent-Reg 
        & \cellcolor{lightcyan3}{33.08} 
        & \cellcolor{lightcyan4}{50.41} 
        & \cellcolor{lightcyan2}{12.91} 
        & \cellcolor{lightcyan2}{6.70} 
        & \cellcolor{lightcyan5}{66.90} 
        & \cellcolor{lightcyan4}{25.32} 
        & \cellcolor{lightcyan3}{29.75} 
        & \cellcolor{lightcyan1}{27.54} 
        & \cellcolor{lightcyan3}{37.02} \\

        - 80/20 
        & \cellcolor{lightcyan2}{32.69} 
        & \cellcolor{lightcyan2}{49.06} 
        & \cellcolor{lightcyan5}{16.25} 
        & \cellcolor{lightcyan3}{7.08} 
        & \cellcolor{lightcyan3}{65.67} 
        & \cellcolor{lightcyan2}{24.13} 
        & \cellcolor{lightcyan2}{28.62} 
        & \cellcolor{lightcyan5}{29.94} 
        & \cellcolor{lightcyan1}{36.92} \\

        - PAPO 
        & \cellcolor{lightcyan5}{34.70} 
        & \cellcolor{lightcyan5}{51.56} 
        & \cellcolor{lightcyan4}{14.58} 
        & \cellcolor{lightcyan5}{8.30} 
        & \cellcolor{lightcyan4}{65.90} 
        & \cellcolor{lightcyan5}{26.75} 
        & \cellcolor{lightcyan5}{30.88} 
        & \cellcolor{lightcyan4}{29.57} 
        & \cellcolor{lightcyan5}{37.81} \\

        \specialrule{1pt}{0.4ex}{0.4ex}
            \multicolumn{10}{c}{\textit{\textbf{Qwen2.5-14B-Base}}} \\
        \midrule
        - Base 
        & \cellcolor{lightred1}{26.04} 
        & \cellcolor{lightred1}{36.56} 
        & \cellcolor{lightred1}{6.67} 
        & \cellcolor{lightred1}{5.00} 
        & \cellcolor{lightred1}{56.23} 
        & \cellcolor{lightred1}{20.77} 
        & \cellcolor{lightred1}{28.38} 
        & \cellcolor{lightred1}{30.31} 
        & \cellcolor{lightred1}{47.33} \\

        - DAPO 
        & \cellcolor{lightred4}{38.57} 
        & \cellcolor{lightred4}{60.63} 
        & \cellcolor{lightred2}{12.50} 
        & \cellcolor{lightred2}{11.25} 
        & \cellcolor{lightred3}{70.03} 
        & \cellcolor{lightred4}{30.93} 
        & \cellcolor{lightred3}{46.38} 
        & \cellcolor{lightred2}{33.27} 
        & \cellcolor{lightred3}{48.00} \\

        - Ent-Reg 
        & \cellcolor{lightred3}{38.34} 
        & \cellcolor{lightred2}{58.75} 
        & \cellcolor{lightred4}{13.75} 
        & \cellcolor{lightred3}{11.67} 
        & \cellcolor{lightred5}{71.40} 
        & \cellcolor{lightred2}{30.83} 
        & \cellcolor{lightred2}{46.25} 
        & \cellcolor{lightred3}{33.46} 
        & \cellcolor{lightred4}{48.15} \\

        - 80/20 
        & \cellcolor{lightred2}{37.59} 
        & \cellcolor{lightred3}{59.38} 
        & \cellcolor{lightred3}{13.33} 
        & \cellcolor{lightred4}{12.08} 
        & \cellcolor{lightred2}{69.38} 
        & \cellcolor{lightred3}{30.88} 
        & \cellcolor{lightred4}{47.12} 
        & \cellcolor{lightred4}{33.64} 
        & \cellcolor{lightred5}{48.25} \\

        - PAPO 
        & \cellcolor{lightred5}{39.22} 
        & \cellcolor{lightred5}{62.50} 
        & \cellcolor{lightred5}{15.83} 
        & \cellcolor{lightred5}{13.33} 
        & \cellcolor{lightred4}{71.38} 
        & \cellcolor{lightred5}{31.94} 
        & \cellcolor{lightred5}{48.25} 
        & \cellcolor{lightred5}{33.83} 
        & \cellcolor{lightred2}{47.89} \\

    \bottomrule[1.6pt]
    \end{tabular}}
    \vspace{-3mm}
\end{table}

\begin{figure*}[t]
    \centering
    \includegraphics[width=\linewidth]{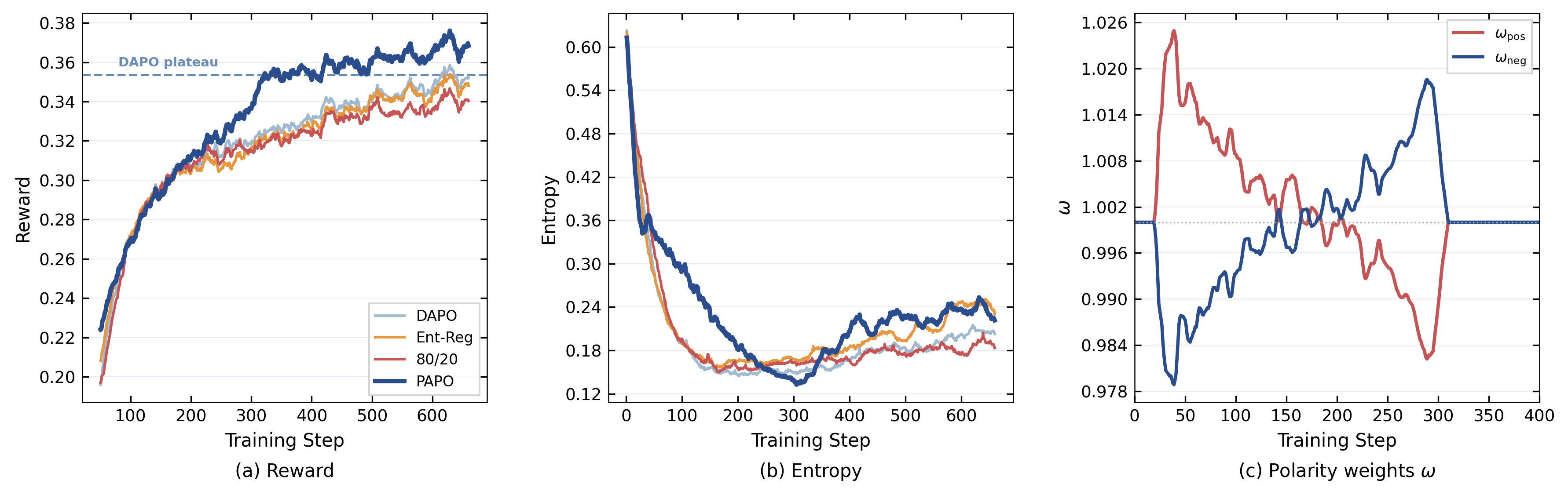}
    \caption{\textbf{Training dynamics of PAPO.} 
    Reward~(left), entropy~(middle) and polarity weight~(right) of Qwen2.5-14B-Base trained on DAPO-Math-17k. 
    \methodname{} delivers significant reward improvement while maintaining a healthier entropy dynamic, and its adaptive entropy modulation relaxes back to near-neutral weights after the transient collapse-recovery phase.
    }
    \label{fig:math_reward_entropy}
\end{figure*}

\paragraph{Superior performance on mathematical reasoning.}
Table~\ref{Tble:Main_Exp_PAPO} reports the results of mathematical reasoning on six benchmarks.
\methodname{} achieves state-of-the-art performance on most benchmarks and delivers consistent overall improvements across both 7B and 14B model scales:
For Qwen2.5-14B, \methodname{} outperforms baselines by 3.3\% on AIME24 and 2.1\% on AIME25; it obtains clear gains of 2.4\% on Minerva and 2.2\% on AMC on Qwen2.5-7B.
Entropy regularization works well on simple benchmarks like MATH500 yet underperforms on challenging competition benchmarks, revealing the limitation of global uniform regularization.
Thus \methodname{}'s token-level polarity-aware reweighting provides finer-grained control, yielding more consistent improvements across benchmarks and scales.

\paragraph{PAPO reshapes policy training through phase-adaptive control.}
Figure~\ref{fig:math_reward_entropy} shows that PAPO regulates optimization by adaptively rebalancing exploration and exploitation across training phases.
In the early stage, the controller prioritizes exploration protection by down-weighting the entropy-contracting branch (\(\omega_{\mathrm{neg}}<1\)), which helps avoid premature contraction.
As entropy decay slows, the controller gradually restores and then mildly favors exploitation (\(\omega_{\mathrm{neg}}>1\)).
After the late-stage gate is triggered, the reweighting is deactivated, and training proceeds with neutral branch weights.
This phase-wise control produces a non-monotonic entropy trajectory: \methodname{} exhibits a transient mid-training contraction, then stabilizes at a higher reward and entropy level than DAPO.

\paragraph{\methodname{} achieves significant training efficiency.}
The reward curve in Figure~\ref{fig:math_reward_entropy} displays significant training efficiency: our method enables the policy model to reach the mature reward performance of DAPO with merely half the training budget, and continues to increase throughout the training stage. 
Such rapid capability advancement does not stem from over-aggressive early exploitation. As validated by entropy trajectories, the policy preserves strong exploratory capacity, manifesting that the rapid performance growth of our model does not compromise reasoning diversity.

\subsection{Analysis}
\label{sec:analysis}

\paragraph{Generalization on out-of-domain benchmarks.}
We further evaluate the cross-domain generalization of math-optimized checkpoints, including code reasoning (CRUX), instruction following (IFEval), and general knowledge (MMLU-Pro).
Despite being trained exclusively on math, \methodname{} achieves comparable or even superior results over DAPO.
For the 14B variant, our method obtains a prominent performance gain of 1.9\% on CRUX, verifying that reasoning competencies cultivated via mathematical training can be effectively transferred to code-related understanding tasks.
In the 7B setting, \methodname{} yields a 1.3\% improvement on IFEval while maintaining comparable on MMLU-Pro.
Our approach achieves desirable improvements without degrading generalization.

\begin{figure}[t]
    \centering
    \includegraphics[width=0.9\linewidth]{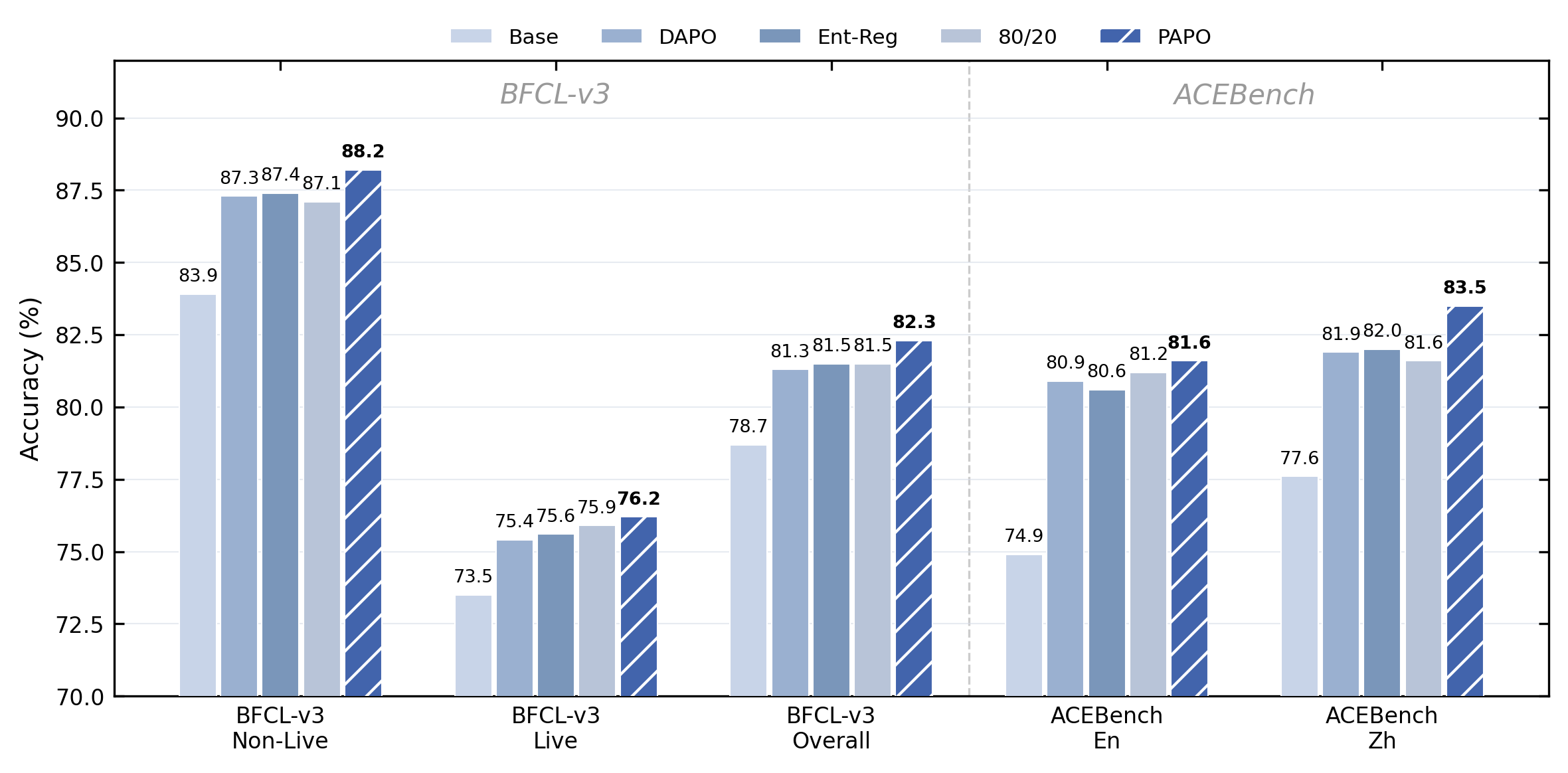}
    \caption{
    \textbf{Performance (\%) on agentic tool-using benchmarks.} 
    \methodname{} achieves the strongest overall performance. It consistently surpasses baselines across different evaluation settings, covering non-live and live scenarios in BFCL, as well as English and Chinese tasks in ACEBench.
    }
  \vspace{1em}
    \label{fig:tool_call}
\end{figure}

\paragraph{Transferability in agentic tool-calling scenarios.}
We further evaluate \methodname{} in agentic tool-call generation scenarios, 
which place high demands on agent-level capabilities such as precise tool intent comprehension and accurate function invocation decision-making, differing from open-ended mathematical reasoning.
This setting tests whether polarity-aware reweighting captures a transferable optimization signal. We train Qwen2.5-14B-Instruct on Tool-N1~\citep{zhang:2025:ToolN1} and evaluate on BFCL-v3~\citep{Patil:2025:BFCL} and ACEBench~\citep{Chen:2025:ACEBench}. As shown in Figure~\ref{fig:tool_call}, \methodname{} achieves the strongest overall performance among matched methods on both benchmarks, achieving an average +1.0\% improvement on BFCL-v3 and the best performance on ACEBench among the compared methods. This suggests that effective-polarity reweighting transfers beyond mathematical reasoning to structured tool-call actions.

\begin{wrapfigure}{r}{0.52\linewidth}
  \centering
  \includegraphics[width=\linewidth]{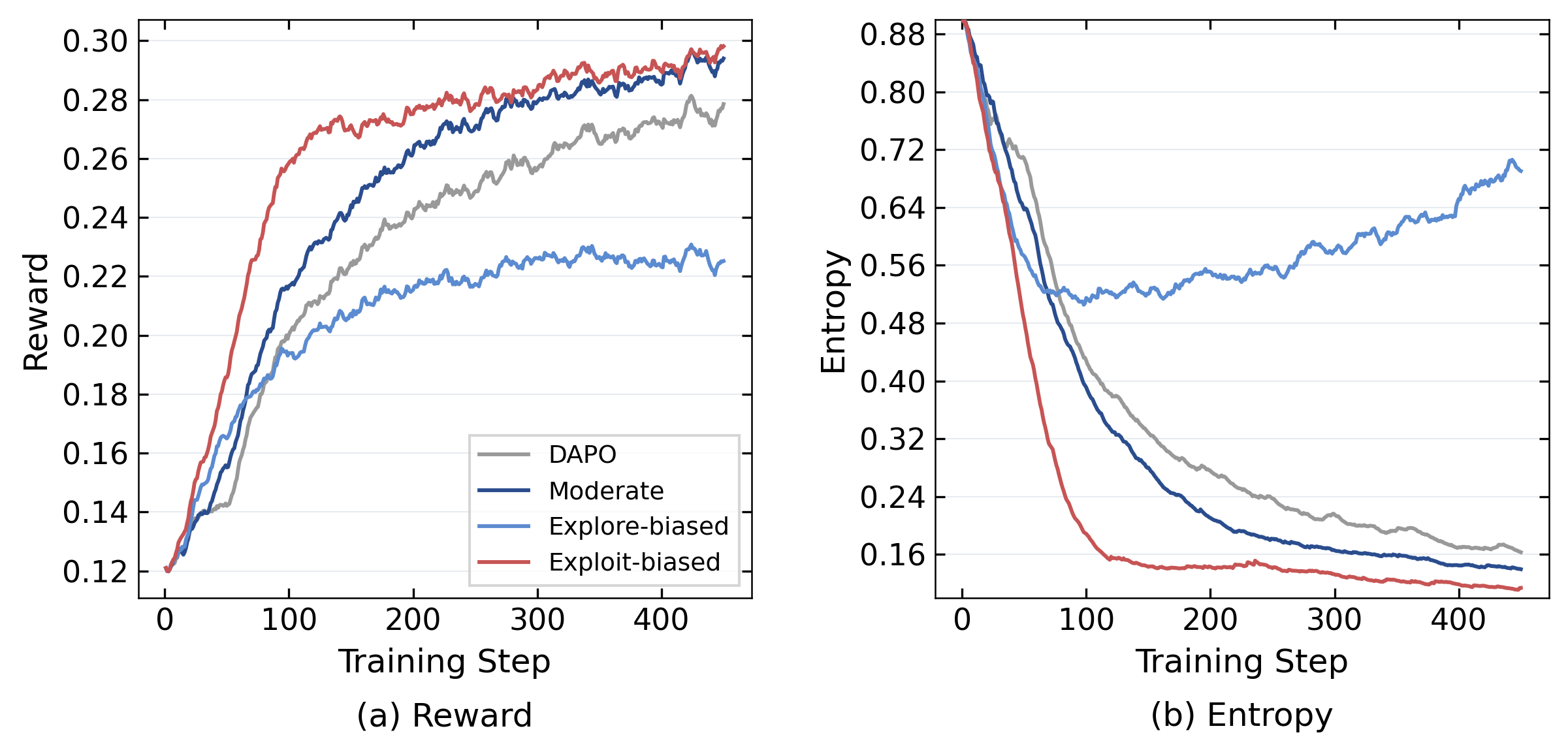}
\caption{Effect of polarity-control strength \(\omega\) on policy dynamics.}
  \label{fig:ablation}
  \vspace{-1.2em}
\end{wrapfigure}

\paragraph{Effect of polarity-aware strength on policy dynamics.}
Figure~\ref{fig:ablation} compares training dynamics with different polarity-control strengths; detailed settings are deferred to Appendix~\ref{app:strength_ablation}. The exploration-oriented setting preserves entropy more aggressively by suppressing entropy-contracting updates, but consequently weakens the optimization pressure needed for reward improvement, leading to slower gains and a lower final reward. The exploitation-oriented setting exhibits the opposite behavior: reward improves rapidly early on because the contractive branch is less restrained, but this acceleration is achieved at the expense of entropy, inducing premature collapse of exploratory capacity and weaker later-stage improvement. 
The moderate setting maintains a controlled entropy decline with partial recovery while eventually reaching a competitive reward level.
In short, polarity control needs to balance the entropy cost of reward gains, keeping the contractive branch sufficiently robust to sustain learning and avoid premature entropy collapse.

\section{Related Work}
\paragraph{Reinforcement Learning with Verifiable Rewards.}
RLVR has become a dominant paradigm for improving the reasoning capabilities of large language models~\citep{Zhang:2025:ReasoningSurvey,Shao:2024:GRPO,zhang2025reinforcement}, motivating a broad line of work on stable policy optimization. GRPO~\citep{Shao:2024:GRPO} pioneered critic-free optimization for reasoning LLMs, while later methods such as DAPO~\citep{Yu:2026:DAPO} and VAPO~\citep{Yu:2026:DAPO} further improved training stability through refined optimization and value modeling. Subsequent work has explored complementary directions, including scalable training recipes~\citep{An:2025:Polaris,He:2025:justrl}, sequence-level stabilization for MoE models~\citep{Zheng:2025:GSPO,Gao:2025:SAPO}, and enhanced credit assignment or self-improving training schemes~\citep{Teng:2026:SKPO,Bai:2026:TTVS}.

\paragraph{Entropy Dynamics and Control.}
Entropy has emerged as a useful lens for understanding exploratory capacity and reasoning behavior in RLVR~\citep{Wang:2025:8020,Cui:2025:EntropyMechanism,Meng:2026:RLSparse,Huang:2026:DirectionofRLVR,Xi:2026:BAPO}. Prior work studies entropy dynamics from both analytical and empirical perspectives, showing that RLVR updates often concentrate on a small set of high-impact reasoning tokens~\citep{Wang:2025:8020,Meng:2026:RLSparse,Huang:2026:DirectionofRLVR}. To mitigate entropy collapse, existing methods mainly rely on global or heuristic interventions, including clipping-based trust-region modifications~\citep{Yu:2026:DAPO,Su:2025:CEGRPO,Chen:2026:ClipControl} and advantage-shaping or adaptive-control schemes~\citep{Chen:2025:EnrtopyAdvantage,Petrenko:2026:EntropyPreserveRL}. In contrast, our work focuses on how sampled token updates directionally expand or contract entropy, and uses this token-level mechanism to implement entropy control through advantage reweighting.

\section{Conclusion, Limitation and Future Work}
\label{sec:conclusion}

\paragraph{Conclusion.}
This work develops a token-level perspective on entropy mechanics in RLVR and introduces intrinsic entropy tendency together with entropy polarity to characterize the directional entropy effect of sampled updates. 
The resulting framework reveals a structural asymmetry and in turn motivates \methodname{}, which adaptively modulates the relative influence of the two polarity branches according to the empirical entropy trajectory.  
Experiments suggest that \methodname{} provides a practical way to improve the exploration–exploitation trade-off in RLVR.

\paragraph{Limitation and Future Work.}
This work is restricted to standard RLVR settings with 7B/14B text-LLMs and verifiable reward formulations, and does not yet address long-horizon regimes involving hundreds of sequential reasoning or interaction steps. An important direction for future work is to examine whether the polarity perspective extends to larger-scale models, multimodal scenarios, longer-horizon decision processes, and broader reward formulations.

\bibliographystyle{plainnat}
\bibliography{references}

\appendix

\section{Theoretical Derivations}
\label{app:full_derivations}

This appendix provides the full derivations for
Proposition~\ref{prop:sampled_logit_update} and
Theorem~\ref{thm:sampled_entropy_change}, and further analyzes the sign and magnitude structure of the intrinsic entropy tendency \(\mathcal{T}\).
Throughout this appendix, we fix a decoding state \(s_t\), and write
\[
p_v := \pi_\theta(v\mid s_t),
\qquad
p_t := \pi_\theta(y_t\mid s_t),
\qquad
\mathcal{H}_t :=
-\sum_{v\in\mathcal{V}}p_v\log p_v .
\]
All quantities are evaluated at the pre-update policy unless otherwise specified.

\subsection{Proof of the Sampled Logit Update}
\label{app:proof_sampled_logit_update}

\paragraph{Restatement.}
For a fixed decoding state \(s_t\), suppose token \(y_t\) is sampled and the local policy is updated by the single policy-gradient term
\(A\log\pi_\theta(y_t\mid s_t)\) with step size \(\eta\).
Then the first-order logit update satisfies
\[
z'_{s_t,v}
=
z_{s_t,v}
+
\eta A\left(\mathbb{I}[v=y_t]-p_v\right),
\qquad
\forall v\in\mathcal{V}.
\]

\paragraph{Proof.}
Under the local softmax parameterization,
\[
p_v
=
\pi_\theta(v\mid s_t)
=
\frac{\exp(z_{s_t,v})}
{\sum_{u\in\mathcal{V}}\exp(z_{s_t,u})}.
\]
For the sampled token \(y_t\), we have
\[
\log\pi_\theta(y_t\mid s_t)
=
z_{s_t,y_t}
-
\log
\left(
\sum_{u\in\mathcal{V}}\exp(z_{s_t,u})
\right).
\]
Differentiating with respect to \(z_{s_t,v}\) gives
\[
\frac{\partial\log\pi_\theta(y_t\mid s_t)}
{\partial z_{s_t,v}}
=
\mathbb{I}[v=y_t]-p_v.
\]
Therefore,
\[
\frac{\partial}
{\partial z_{s_t,v}}
\left(
A\log\pi_\theta(y_t\mid s_t)
\right)
=
A\left(\mathbb{I}[v=y_t]-p_v\right).
\]
Applying one gradient-ascent step yields
\[
z'_{s_t,v}
=
z_{s_t,v}
+
\eta A\left(\mathbb{I}[v=y_t]-p_v\right),
\]
which proves Proposition~\ref{prop:sampled_logit_update}.
\qed

\subsection{Proof of the Entropy-Polarity Theorem}
\label{app:proof_main_theorem}

\paragraph{Restatement.}
Under the same local softmax setting, the one-step entropy change induced by the sampled update in Proposition~\ref{prop:sampled_logit_update} satisfies
\[
\Delta\mathcal{H}_t
=
-\eta A\,t_1(s_t,y_t)
+
\eta A\,t_2(s_t)
+
O(\eta^2),
\]
where
\[
t_1(s_t,y_t)
:=
p_t(\mathcal{H}_t+\log p_t),
\qquad
t_2(s_t)
:=
\sum_{v\in\mathcal{V}}
p_v^2(\mathcal{H}_t+\log p_v).
\]
Equivalently, with
\[
\mathcal{T}(s_t,y_t)
:=
-t_1(s_t,y_t)+t_2(s_t),
\]
we have
\[
\Delta\mathcal{H}_t
=
\eta A\,\mathcal{T}(s_t,y_t)
+
O(\eta^2).
\]

\paragraph{Proof.}
We first compute the gradient of the local entropy with respect to the logits.
Since
\[
\mathcal{H}_t
=
-\sum_{u\in\mathcal{V}}p_u\log p_u,
\]
and the softmax derivative satisfies
\[
\frac{\partial p_u}{\partial z_{s_t,v}}
=
p_u(\mathbb{I}[u=v]-p_v),
\]
we obtain
\begin{align}
\frac{\partial\mathcal{H}_t}{\partial z_{s_t,v}}
&=
-\sum_{u\in\mathcal{V}}
\frac{\partial p_u}{\partial z_{s_t,v}}
(\log p_u+1)
\nonumber\\
&=
-\sum_{u\in\mathcal{V}}
p_u(\mathbb{I}[u=v]-p_v)(\log p_u+1)
\nonumber\\
&=
-p_v(\log p_v+1)
+
p_v\sum_{u\in\mathcal{V}}p_u(\log p_u+1).
\end{align}
Using
\[
\sum_{u\in\mathcal{V}}p_u(\log p_u+1)
=
-\mathcal{H}_t+1,
\]
we have
\[
\frac{\partial\mathcal{H}_t}{\partial z_{s_t,v}}
=
-p_v(\mathcal{H}_t+\log p_v).
\]

Let
\[
\delta z_{s_t,v}:=z'_{s_t,v}-z_{s_t,v}.
\]
By the first-order Taylor expansion of entropy around the pre-update logits,
\[
\Delta\mathcal{H}_t
=
\sum_{v\in\mathcal{V}}
\frac{\partial\mathcal{H}_t}{\partial z_{s_t,v}}
\delta z_{s_t,v}
+
O(\|\delta z_t\|^2).
\]
From Proposition~\ref{prop:sampled_logit_update},
\[
\delta z_{s_t,v}
=
\eta A(\mathbb{I}[v=y_t]-p_v).
\]
Substituting the entropy gradient and the logit update gives
\begin{align}
\Delta\mathcal{H}_t
&=
-\eta A
\sum_{v\in\mathcal{V}}
p_v(\mathcal{H}_t+\log p_v)
(\mathbb{I}[v=y_t]-p_v)
+
O(\eta^2)
\nonumber\\
&=
-\eta A
p_t(\mathcal{H}_t+\log p_t)
+
\eta A
\sum_{v\in\mathcal{V}}
p_v^2(\mathcal{H}_t+\log p_v)
+
O(\eta^2).
\end{align}
By the definitions in Eq.~\eqref{eq:t1_t2_def}, this becomes
\[
\Delta\mathcal{H}_t
=
-\eta A\,t_1(s_t,y_t)
+
\eta A\,t_2(s_t)
+
O(\eta^2),
\]
which proves Eq.~\eqref{eq:entropy_decomposition}.

Using Eq.~\eqref{eq:structural_polarity},
\[
\mathcal{T}(s_t,y_t)
=
-t_1(s_t,y_t)+t_2(s_t),
\]
we further obtain
\[
\Delta\mathcal{H}_t
=
\eta A\,\mathcal{T}(s_t,y_t)
+
O(\eta^2).
\]
Together with the definition
\[
\mathcal{P}(s_t,y_t,A)=A\mathcal{T}(s_t,y_t)
\]
in Eq.~\eqref{eq:polarity_and_entropy_change}, this also gives
\[
\Delta\mathcal{H}_t
=
\eta\,\mathcal{P}(s_t,y_t,A)
+
O(\eta^2).
\]
This completes the proof of Theorem~\ref{thm:sampled_entropy_change}.
\qed

\subsection{Sign Analysis and Directional Asymmetry of Entropy Tendency}
\label{app:polarity_analysis}

Theorem~\ref{thm:sampled_entropy_change} shows that the intrinsic entropy tendency is
\[
\mathcal{T}(s_t,y_t)
=
-t_1(s_t,y_t)+t_2(s_t).
\]
This section further analyzes how \(t_1\) and \(t_2\) determine the sign and magnitude of \(\mathcal{T}\).

\paragraph{The sampled-token term \(t_1\).}
The term
\[
t_1(s_t,y_t)
=
p_t(\mathcal{H}_t+\log p_t)
\]
is the only component in \(\mathcal{T}\) that depends on the sampled token \(y_t\).
Since \(p_t>0\), its sign is fully determined by \(\mathcal{H}_t+\log p_t\):
\[
\operatorname{sgn}\!\left(t_1(s_t,y_t)\right)
=
\operatorname{sgn}\!\left(\mathcal{H}_t+\log p_t\right).
\]
Therefore,
\[
t_1(s_t,y_t)>0
\quad\Longleftrightarrow\quad
p_t>\exp(-\mathcal{H}_t),
\]
and
\[
t_1(s_t,y_t)<0
\quad\Longleftrightarrow\quad
p_t<\exp(-\mathcal{H}_t).
\]
Thus, \(t_1\) separates sampled tokens by an entropy-induced probability threshold. Tokens above this threshold have a positive \(t_1\), and hence contribute negatively to \(\mathcal{T}\) through the term \(-t_1\). Tokens below this threshold have a negative \(t_1\), and hence contribute positively to \(\mathcal{T}\).

\paragraph{The distribution term \(t_2\).}
The term
\[
t_2(s_t)
=
\sum_{v\in\mathcal{V}}
p_v^2(\mathcal{H}_t+\log p_v)
\]
does not depend on the sampled token. Instead, it is shared by all sampled-token updates at the same decoding state and depends only on the shape of the local next-token distribution.
A useful equivalent form shows that \(t_2\) is always non-negative:
\[
\begin{aligned}
t_2(s_t)
&=
\sum_{v\in\mathcal{V}}p_v^2\log p_v
+
\mathcal{H}_t\sum_{v\in\mathcal{V}}p_v^2  \\
&=
\mathbb{E}_{v\sim p}\!\left[p_v\log p_v\right]
-
\mathbb{E}_{v\sim p}\!\left[p_v\right]
\mathbb{E}_{v\sim p}\!\left[\log p_v\right]  \\
&=
\operatorname{Cov}_{v\sim p}\!\left(p_v,\log p_v\right)
\ge 0 .
\end{aligned}
\]
The last inequality follows because \(p_v\) and \(\log p_v\) are monotone functions of the same probability value.
Therefore, \(t_2(s_t)\) acts as a non-negative state-wise distributional correction.
Since it enters \(\mathcal{T}(s_t,y_t)=-t_1(s_t,y_t)+t_2(s_t)\) with a positive sign, it shifts the intrinsic tendency toward entropy expansion for all sampled tokens at the same state.

\paragraph{Sign analysis of \(\mathcal{T}\).}
Combining the two terms, the sign of \(\mathcal{T}\) is determined by the competition between the sampled-token term \(t_1\) and the distribution term \(t_2\):
\[
\mathcal{T}(s_t,y_t)<0
\quad\Longleftrightarrow\quad
t_1(s_t,y_t)>t_2(s_t),
\]
and
\[
\mathcal{T}(s_t,y_t)>0
\quad\Longleftrightarrow\quad
t_1(s_t,y_t)<t_2(s_t).
\]
Therefore, when the sampled-token contraction effect \(t_1\) dominates the distributional correction \(t_2\), the token has a negative intrinsic tendency and tends to decrease local entropy under positive reinforcement. Conversely, when \(t_2\) dominates \(t_1\), or when \(t_1<0\), the token has a positive intrinsic tendency and tends to increase local entropy under positive reinforcement. For negative-advantage updates, the realized direction is reversed by the sign of \(A\), which motivates the effective entropy polarity
\[
\mathcal{P}(s_t,y_t,A)=A\mathcal{T}(s_t,y_t).
\]

\paragraph{Pointwise asymmetry and realized-scale comparability.}
The decomposition reveals a subtle distinction between the pointwise form of the intrinsic terms and the empirical scale of the realized components that appear in the entropy change.
For a fixed sampled token, the sampled-token term has magnitude
\[
|t_1(s_t,y_t)|
=
p_t|\mathcal{H}_t+\log p_t|,
\]
which is pointwise first order in the sampled-token probability \(p_t\), up to the logarithmic factor.
The distribution term is
\[
t_2(s_t)
=
\sum_{v\in\mathcal{V}}
p_v^2(\mathcal{H}_t+\log p_v),
\]
and therefore aggregates squared-probability contributions over the whole next-token distribution.

This pointwise first-order versus second-order form should not be interpreted as implying that the realized distributional component is negligible in practice.
Indeed, under the same policy that generates the sampled token, and for a fixed signed training signal \(A\),
\[
\mathbb{E}_{y_t\sim\pi(\cdot\mid s_t)}
\left[
A\,t_1(s_t,y_t)
\right]
=
A
\sum_{v\in\mathcal{V}}
p_v\,p_v(\mathcal{H}_t+\log p_v)
=
A\,t_2(s_t).
\]
Thus, when rollout statistics are collected over sampled token instances, the sampling process itself gives the sampled-token component an additional probability weighting, making the empirical scale of \(A t_1\) naturally comparable to that of \(A t_2\).
This explains why the two realized components can appear on the same order in practice, despite the different pointwise dependence of \(t_1\) and \(t_2\) on token probability.

The resulting asymmetry is therefore directional rather than a universal magnitude dominance.
High-probability tokens are sampled more frequently and fall on the contractive side of \(-t_1\); when this token-specific concentration effect exceeds the positive correction \(t_2\), the intrinsic tendency becomes contractive.
In contrast, entropy expansion is associated with lower-probability samples or cases where the distributional correction is strong enough to offset the sampled-token concentration effect.

\section{Experimental Details}
\label{sec:appendix_exp_details}

\subsection{Pseudo Code of PAPO}

To intuitively present the pipeline of PAPO, we summarize the pseudo codes of PAPO in Algorithm~\ref{alg:papo}.

\begin{algorithm}[t]
\caption{\methodname{}: Polarity-Aware Policy Optimization}
\label{alg:papo}%
\label{alg:method}%
\begin{algorithmic}[1]
\Require Policy $\pi_\theta$; EMA coefficient $\beta$; bounds $\omega_{\min}, \omega_{\max}$; warmup $N_{\mathrm{w}}$; gate ratio $\gamma_{\mathrm{gate}}$
\State $s_0 \leftarrow 0$;\; $s_{\mathrm{ref}}, h_{\mathrm{ref}} \leftarrow \texttt{None}$;\; \texttt{active} $\leftarrow$ \texttt{True}
\For{step $k = 1, 2, \dots$}
  \State Sample responses; compute advantages $A^{(i)}$
  \ForAll{response $i$, position $t$} \Comment{\textcolor{gray}{Polarity}}
    \State $\mu \leftarrow \sum_{v} \pi(v \mid s_t) \log \pi(v \mid s_t)$;\;\;
           $t_1 \leftarrow \pi(y_t \mid s_t)(\log \pi(y_t \mid s_t) - \mu)$
    \State $t_2 \leftarrow \sum_{v} \pi(v \mid s_t)^2 (\log \pi(v \mid s_t) - \mu)$;\;\;
           $\mathcal{P}_t^{(i)} \leftarrow A^{(i)} \cdot (-t_1 + t_2)$
  \EndFor
  \State $h_k \leftarrow$ mean entropy;\; $s_k \leftarrow \beta\, s_{k-1} + (1{-}\beta)(h_k - h_{k-1})$
  \If{$k = N_{\mathrm{w}}$} $s_{\mathrm{ref}} \leftarrow s_k$;\; $h_{\mathrm{ref}} \leftarrow h_k$ \Comment{Lock references}
  \EndIf
  \If{\texttt{active} \textbf{and} $\mathrm{EMA}(h_k) < \gamma_{\mathrm{gate}} h_{\mathrm{ref}}$} \texttt{active} $\leftarrow$ \texttt{False} \Comment{Deactivate}
  \EndIf
  \If{$k > N_{\mathrm{w}}$ \textbf{and} \texttt{active}} \Comment{\textcolor{gray}{Adaptive weights}}
    \State $p_k \leftarrow \mathrm{clip}_{[0,1]}\!\bigl(\frac{s_k - s_{\mathrm{ref}}}{-s_{\mathrm{ref}}}\bigr)$
    \State $\omega_{\mathrm{neg}} \leftarrow \omega_{\min} + (\omega_{\max} - \omega_{\min})\,p_k^2$;\;\; $\omega_{\mathrm{pos}} \leftarrow 1/\omega_{\mathrm{neg}}$
  \Else\;\; $\omega_{\mathrm{pos}}, \omega_{\mathrm{neg}} \leftarrow 1$
  \EndIf
  \ForAll{response $i$, position $t$} \Comment{\textcolor{gray}{Reweight (Eq.~\eqref{eq:reweight})}}
    \State $\tilde{A}_t^{(i)} \leftarrow A^{(i)} \cdot \omega_{\mathrm{pos}}$ if $\mathcal{P}_t^{(i)} {>} 0$,\; $A^{(i)} \cdot \omega_{\mathrm{neg}}$ if $\mathcal{P}_t^{(i)} {<} 0$,\; $A^{(i)}$ otherwise
  \EndFor
  \State Update $\theta$ via clipped policy gradient with $\tilde{A}_t^{(i)}$
\EndFor
\end{algorithmic}
\end{algorithm}

\subsection{Evaluation Details}

\label{app:eval_details}

We evaluate across benchmarks spanning mathematical reasoning, code generation, instruction following, and general multi-task abilities.

\paragraph{Mathematical reasoning.}
We evaluate on six benchmarks: OlympiadBench~\citep{He:2024:olympiadbench}, AMC23~\citep{Li:2024:AMC}, AIME24, AIME25~\citep{Zhang:2024:AIME}, MATH500~\citep{Lightman:2024:MATH500}, and Minerva.
For all math benchmarks, answers are extracted from model outputs via rule-based parsing and compared against ground-truth solutions using exact match.
In the main table (Table~\ref{Tble:Main_Exp_PAPO}), we report mean@8 accuracy: each problem is solved 8 times independently and the average accuracy across all attempts is reported.

\paragraph{Out-of-domain benchmarks.}
To assess generalization beyond the math training distribution, we evaluate on three out-of-domain benchmarks:
CRUX~\citep{Gu:2024:CRUX} for code reasoning, IFEval~\citep{Zhou:2023:IFEval} for instruction following, and MMLU-Pro~\citep{Wang:2024:MMLU} for general knowledge.
CRUX is evaluated via the ZeroEval library; IFEval and MMLU-Pro are evaluated using lm-evaluation-harness.
For all OOD benchmarks, we report pass@1 accuracy.

\paragraph{Agentic benchmarks.}
We evaluate on BFCL-v3~\citep{Patil:2025:BFCL} and ACEBench~\citep{Chen:2025:ACEBench}, reporting accuracy for all results. For BFCL, we follow the official benchmark setup and evaluate on the Live and Non-live subsets while excluding the Multi-turn split. Non-live consists of synthetically generated or curated test cases, whereas Live contains real user-contributed queries. Each subset includes four categories: Simple, Multiple, Parallel, and Parallel Multiple. 
For ACEBench, we exclude multi-turn cases and focus on two subsets under the Normal category: Atom and Single-turn.

\subsection{Baselines}
\label{app:baseline_details}

\paragraph{DAPO.}
DAPO~\citep{Yu:2026:DAPO} serves as the vanilla RLVR baseline.
It uses the clip-higher mechanism ($\epsilon_{\mathrm{high}} > \epsilon_{\mathrm{low}}$) to widen the trust region for advantaged tokens, and employs dynamic sampling: for each prompt, more candidate responses are generated than the training batch requires, and prompts whose responses are all correct or all incorrect are filtered out so that only prompts with mixed outcomes contribute to the gradient.
DAPO applies no explicit entropy control beyond these mechanisms.

\paragraph{Entropy Regularization.}
This baseline augments the GRPO objective with an entropy bonus term $\alpha \mathcal{H}(\pi_\theta)$ added to the loss, encouraging the policy to maintain higher entropy throughout training.
The regularization is applied uniformly across all tokens regardless of their role in entropy dynamics.

\paragraph{80/20.}
Following~\cite{Wang:2025:8020}, this method retains only the top 20\% of tokens ranked by per-token entropy and computes the policy gradient exclusively on this high-entropy subset.
The motivation is that reasoning-critical tokens concentrate at high-entropy positions; by discarding low-entropy tokens, the gradient signal focuses on the subset most relevant to exploratory reasoning.

\subsection{Training Configuration}
\label{app:math_config}

All math experiments are built on the VeRL framework and trained on 16 $\times$ H20 GPUs.
We train for 6 epochs with a learning rate of $1 \times 10^{-6}$ and a sampling temperature of 1.0.
The generation batch size is 256; the training batch size is 128 with gradient accumulation over 64 steps.
For each prompt, 8 candidate responses are sampled with a maximum completion length of 8{,}192 tokens.
clipping range $[\epsilon_{\mathrm{low}}, \epsilon_{\mathrm{high}}] = [0.2, 0.28]$ follows DAPO, without KL penalty.

For \methodname{}, the polarity weight bounds are set to $[\omega_{\min}, \omega_{\max}] = [0.98, 1.02]$ for the 14B backbone and $[0.98, 1.03]$ for the 7B backbone.
The entropy-slope EMA uses $\beta_{\mathrm{warm}} = 0.95$ during the warmup phase and $\beta_{\mathrm{run}} = 0.9$ thereafter.
The warmup phase lasts $N_{\mathrm{w}} = 20$ steps, during which training runs with neutral weights ($\omega_{\mathrm{pos}} = \omega_{\mathrm{neg}} = 1$) to estimate the reference collapse rate $s_{\mathrm{ref}}$.
The gate threshold is set to $\gamma_{\mathrm{gate}} = 0.3$: once the smoothed entropy falls below $0.3 \cdot h_{\mathrm{ref}}$, the adaptive reweighting deactivates permanently.
All baselines (DAPO, Ent-Reg, 80/20) share the same training budget, batch sizes, and evaluation protocol to ensure a fair comparison.

Agentic experiments are based on Tool-N1 dataset for 7 epochs with a learning rate of $1 \times 10^{-6}$ and a sampling temperature of 0.7.
The generation batch size is 256; the training batch size is 128 with gradient accumulation over 64 steps.
Group size is 5 and with a maximum completion length of 8{,}192 tokens.
For \methodname{}, the polarity weight bounds are set to $[\omega_{\min}, \omega_{\max}] = [0.99, 1.05]$.
The entropy-slope EMA uses $\beta_{\mathrm{warm}} = 0.95$ during the warmup phase and $\beta_{\mathrm{run}} = 0.9$ thereafter.
The warmup phase lasts $N_{\mathrm{w}} = 25$ steps, during which training runs with neutral weights ($\omega_{\mathrm{pos}} = \omega_{\mathrm{neg}} = 1$) to estimate the reference collapse rate $s_{\mathrm{ref}}$.

\subsection{Details of Single-Polarity Ablation}
\label{app:single_polarity}

To verify the functional meaning of effective polarity, we conduct the single-polarity ablation shown in Figure~\ref{fig:polarity_directional_control}. For each sampled token update, we compute the effective polarity $\mathcal{P}(s_t,y_t,A)=A\mathcal{T}(s_t,y_t).$
The positive-only variant keeps token-level policy-gradient terms with \(\mathcal{P}>0\) and masks out terms with \(\mathcal{P}<0\). The negative-only variant keeps terms with \(\mathcal{P}<0\) and masks out terms with \(\mathcal{P}>0\). This diagnostic ablation is implemented on vanilla GRPO rather than the DAPO-based recipe used in the main experiments. This choice isolates the effect of polarity sign and avoids confounding from DAPO-specific sampling or clipping mechanisms and from \methodname{}'s adaptive reweighting. Apart from this objective choice and the polarity mask, rewards, advantage normalization, and other hyperparameters follow the main math experimental setup.

\begin{table*}[t]
\centering
\caption{\textbf{Performance (\%) on out-of-domain tasks.}}
\label{tab:ood_detailed}
\resizebox{\textwidth}{!}{
\begin{tabular}{l|ccc|ccc|ccc}
\toprule[1.6pt]
& \multicolumn{3}{c|}{\textbf{CRUX}} & \multicolumn{3}{c|}{\textbf{IFEval}} & \multicolumn{3}{c}{\textbf{MMLU-Pro}} \\
\cmidrule{2-4} \cmidrule{5-7} \cmidrule{8-10}
\textbf{Methods} & Pass@1 & Pass@5 & Pass@10 & Pass@1 & Pass@5 & Pass@10 & Pass@1 & Pass@5 & Pass@10 \\
\midrule
\multicolumn{10}{c}{\textit{\textbf{Qwen-2.5-7B-Base}}} \\
\midrule
Base    & 16.25 & 47.00 & 60.62 & 27.54 & 39.56 & 43.81 & 36.98 & 65.33 & 75.20 \\
DAPO    & 30.63 & 61.12 & 70.38 & 28.28 & 53.79 & 61.18 & 37.81 & 70.82 & 81.83 \\
Ent-Reg & 29.75 & 54.62 & 66.50 & 27.54 & 54.16 & 62.85 & 37.02 & 70.11 & 82.33 \\
80/20   & 28.62 & 60.75 & 68.25 & 29.94 & 52.13 & 60.26 & 36.92 & 69.95 & 81.92 \\
PAPO    & 30.88 & 61.00 & 69.62 & 29.57 & 52.31 & 60.81 & 37.81 & 71.16 & 82.25 \\
\specialrule{1pt}{0.4ex}{0.4ex}
\multicolumn{10}{c}{\textit{\textbf{Qwen-2.5-14B-Base}}} \\
\midrule
Base    & 28.38 & 60.75 & 70.12 & 30.31 & 58.23 & 66.91 & 47.33 & 77.35 & 86.12 \\
DAPO    & 46.38 & 73.12 & 80.38 & 33.27 & 57.12 & 67.84 & 48.00 & 77.20 & 86.23 \\
Ent-Reg & 46.25 & 70.75 & 79.12 & 33.46 & 58.04 & 68.76 & 48.15 & 77.84 & 86.46 \\
80/20   & 47.12 & 73.25 & 78.25 & 33.64 & 58.04 & 66.54 & 48.25 & 77.60 & 86.07 \\
PAPO    & 48.25 & 74.62 & 80.12 & 33.83 & 57.12 & 66.54 & 47.89 & 77.65 & 86.02 \\
\bottomrule[1.6pt]
\end{tabular}
}
\vspace{-1em}
\end{table*}

\subsection{Details of Polarity-Aware Strength Ablation}
\label{app:strength_ablation}

This ablation (Figure~\ref{fig:ablation}) examines how the polarity weight bounds $[\omega_{\min}, \omega_{\max}]$ affect the exploration--exploitation trade-off during training.
All three \methodname{} configurations use Qwen2.5-7B-Base on the same math training setup described in Appendix~\ref{app:math_config}; the only difference is the polarity weight range:

\begin{itemize}[leftmargin=1.5em, itemsep=2pt]
  \item \textbf{Explore-biased:} $[\omega_{\min}, \omega_{\max}] = [0.99, 1.06]$. The wider upper bound amplifies the expansion branch while the near-unity lower bound leaves the contraction branch almost unmodified.
  \item \textbf{Exploit-biased:} $[\omega_{\min}, \omega_{\max}] = [0.96, 1.00]$. The sub-unity upper bound prevents any amplification of the expansion branch, while the lower bound actively suppresses it, allowing the contraction branch to dominate.
  \item \textbf{Moderate:} $[\omega_{\min}, \omega_{\max}] = [0.98, 1.03]$. This is the same setting used in the 7B main experiment.
\end{itemize}

\noindent The DAPO baseline shares the same training budget and hyperparameters but applies no polarity reweighting ($\omega_{\mathrm{pos}} = \omega_{\mathrm{neg}} = 1$ throughout).

\subsection{Prompt Templates}
\label{app:prompt_templates}

\paragraph{Math reasoning.}
All math experiments use the Qwen chat template with the following prompt format:

\begin{tcolorbox}[colback=gray!5, colframe=gray!50, boxrule=0.3pt, left=4pt, right=4pt, top=4pt, bottom=4pt]
\small\ttfamily
<|im\_start|>system\\
You are a helpful assistant.<|im\_end|>\\
<|im\_start|>user\\
\{input\}\\
Please reason step by step, and put your final answer within \textbackslash boxed\{\}.<|im\_end|>\\
<|im\_start|>assistant
\end{tcolorbox}

\noindent where \texttt{\{input\}} is replaced by the problem statement from DAPO-Math-17K.

\paragraph{Agentic Reasoning}
All agentic experiments use the Qwen chat template with the following prompt format:
\begin{tcolorbox}[colback=gray!5, colframe=gray!50, boxrule=0.3pt, left=4pt, right=4pt, top=4pt, bottom=4pt]
\small\ttfamily
$<$|im\_start|$>$system\\
You are an expert in composing functions. You are given a question and a set of possible functions. Based on the question, you will need to make one or more function/tool calls to achieve the purpose. If none of the function can be used, point it out. If the given question lacks the parameters required by the function, also point it out. You should only return the function call in tools call sections. Here is a list of functions in JSON format that you can invoke:\\
\# Tool\\
\\
You are provided with function signatures within $<$tools$>$$<$/tools$>$ XML tags:\\
\\
$<$tools$>$\\
\{tools\}\\
$<$/tools$>$\\
\\
In each action step, you MUST:\\
1. Think about the reasoning process in the mind and enclosed your reasoning within $<$think$>$$<$/think$>$ XML tags.\\
2. Then, provide a json object with function names and arguments within $<$tool\_call$>$$<$/tool\_call$>$ XML tags. i.e., $<$tool\_call$>$[\{``name'': $<$function-name$>$, ``arguments'': $<$args-json-object$>$\}, \{``name'': $<$function-name2$>$, ``arguments'': $<$args-json-object2$>$\}, ...]$<$/tool\_call$>$\\
3. Make sure both the reasoning and the tool call steps are included together in one single reply.\\
A complete reply example is: $<$think$>$ To address the query, I need to send the email to Bob and then buy the banana through walmart. $<$/think$>$ $<$tool\_call$>$ [\{``name'': ``email'', ``arguments'': \{``receiver'': ``Bob'', ``content'': ``I will bug banana through walmart''\}\}, \{``name'': ``walmart'', ``arguments'': \{``input'': ``banana''\}\}]$<$/tool\_call$>$. Please make sure the type of the arguments is correct.$<$|im\_end|$>$\\
$<$|im\_start|$>$user\\
\{input\}\\$<$|im\_end|$>$\\
$<$|im\_start|$>$assistant

\end{tcolorbox}
\noindent where \texttt{\{tools\}} and \texttt{\{input\}} are replaced by the tool list and the problem statement from the Tool-N1 dataset, respectively.

\section{Additional Results}
\label{sec:additional_results}

\subsection{Positional Distribution of Entropy Polarity}
\label{app:polarity_position}
We explore how $\mathcal{P}$ varies with token position within each rollout in Figure~\ref{fig:t12_position}.

\paragraph{Magnitude distribution.}
The mean $|\mathcal{P}|$ remains nearly constant across all positions ($\approx 0.04$), while the median sits approximately three orders of magnitude lower (${\sim}10^{-5}$), confirming the heavy-tail sparsity reported in Section~\ref{sec:theory_verification}.
This gap implies that more than half of all tokens at every position contribute negligibly to entropy dynamics; the entropy-shaping effect is concentrated in a sparse tail of high-$|\mathcal{P}|$ tokens.
The median exhibits a mild boundary effect---elevated at the start and end of the rollout (${\sim}10^{-2}$ to $10^{-4}$) compared to ${\sim}10^{-5}$ in the interior---suggesting that tokens near sequence boundaries carry moderately higher polarity on average.
Crucially, however, the flat mean demonstrates that the dominant entropy-shaping tokens are distributed uniformly across all positions.
This justifies a position-agnostic reweighting scheme: methods that filter tokens by position would discard informative gradient signal at every location, whereas polarity-based selection captures the sparse, high-impact tokens regardless of where they appear.

\paragraph{Signed polarity reveals a positional trend.}
The signed mean transitions from net negative (contraction-dominated) in the first ${\sim}15\%$ of the sequence to net positive (expansion-dominated) thereafter, while the median remains at zero throughout.
This pattern admits a natural interpretation: early tokens in reasoning chains typically establish the solution direction (e.g., selecting a proof strategy or computation framework) and tend to be high-confidence, so reinforcement concentrates probability further and produces a contracting effect.
Later tokens handle more uncertain, problem-specific reasoning steps where the model distributes probability more broadly, yielding a net expansion tendency.
The near-zero median confirms that expansion and contraction tokens are balanced in count at every position; the directional bias arises solely from the magnitude-weighted tail, consistent with the heavy-tail structure observed in panel~(a).

\begin{figure}[t]
\centering
\includegraphics[width=1.0\textwidth]{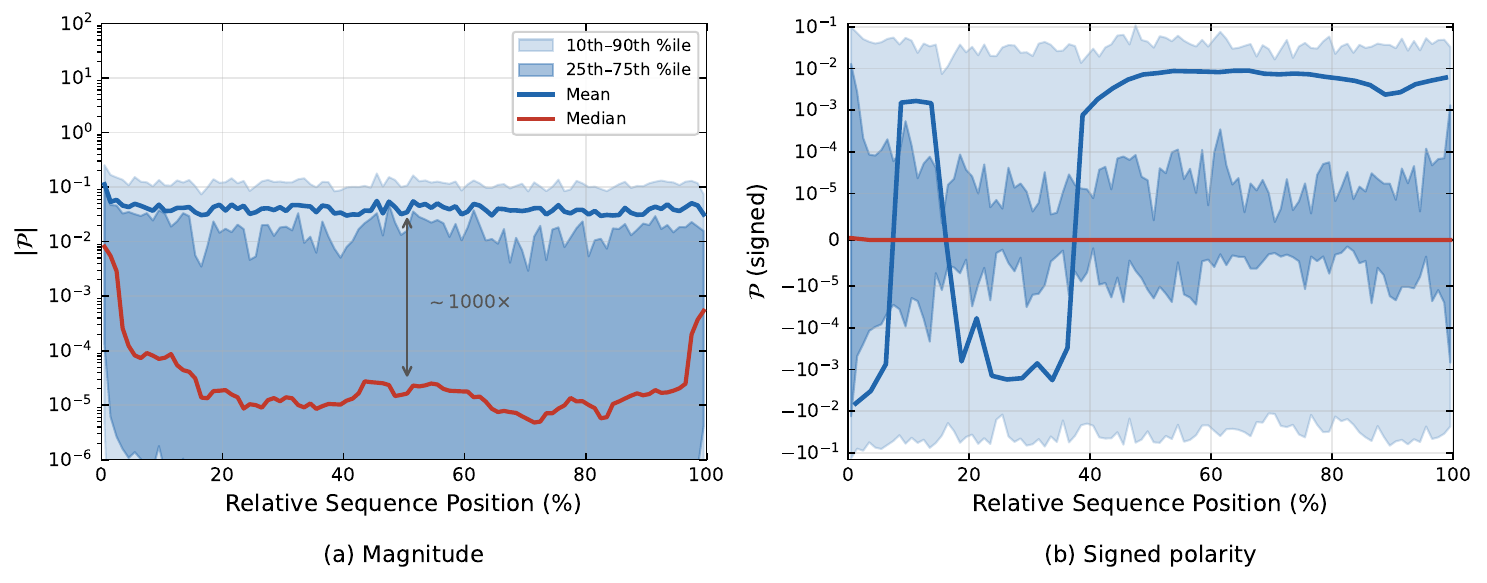}
\caption{Positional distribution of entropy polarity $\mathcal{P}$. Shaded bands show the 10th--90th and 25th--75th percentile ranges; solid lines show mean (blue) and median (red). (a)~Magnitude $|\mathcal{P}|$: the mean remains flat while the median sits ${\sim}1000\times$ lower, reflecting heavy-tail sparsity at every position. (b)~Signed $\mathcal{P}$: the mean transitions from net contraction in early positions to net expansion in later positions, while the median remains at zero.}
\label{fig:t12_position}
\end{figure}

\subsection{Complete Training Dynamics}
\label{sec:appendix_training_dynamics}

\begin{figure*}[h]
    \centering
    \includegraphics[width=\linewidth]{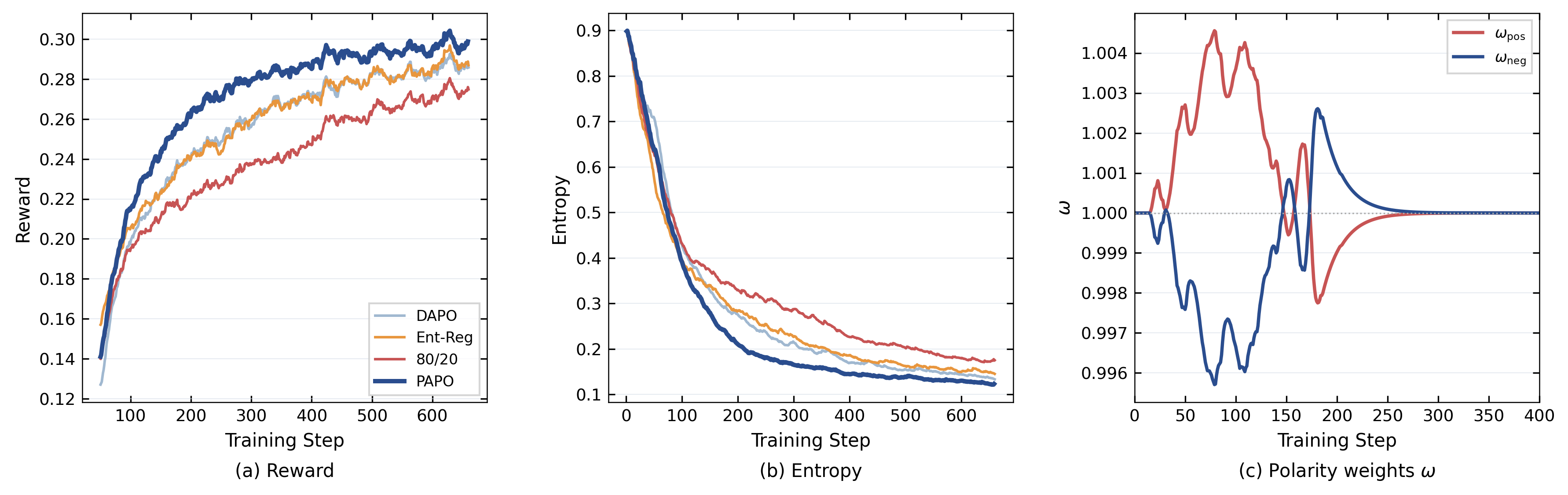}
    \caption{\textbf{Training dynamics on Qwen2.5-7B-Base (Math).}
    (a)~Reward, (b)~entropy, and (c)~polarity weights $\omega$ for all methods trained on DAPO-Math-17k.
    Trends are consistent with the 14B results in Figure~\ref{fig:math_reward_entropy}.}
    \label{fig:7b_curves}
\end{figure*}

\paragraph{Qwen2.5-7B-Base (Math).}
Figure~\ref{fig:7b_curves} shows the full training curves for all four methods on the 7B backbone.
The overall trends mirror the 14B results in Figure~\ref{fig:math_reward_entropy}: \methodname{} achieves the highest final reward with a stable entropy trajectory.
Unlike at 14B, \methodname{}'s entropy settles slightly below DAPO, suggesting that the smaller model naturally favors a more exploitative regime; crucially, entropy remains stable without collapse, indicating sufficient exploratory capacity for continued learning.
The adaptive weights $\omega$ exhibit the same transient rebalancing pattern observed at 14B scale.

\begin{figure*}[h]
    \centering
    \includegraphics[width=\linewidth]{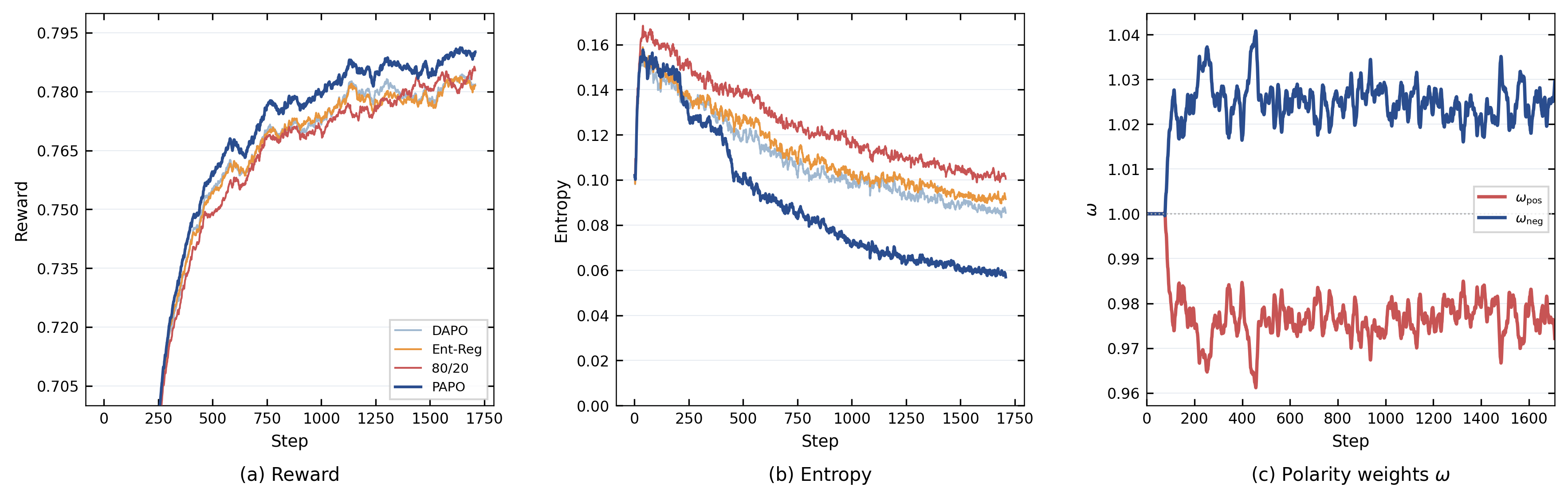}
    \caption{\textbf{Training dynamics on tool-call reasoning (Qwen2.5-14B-Instruct).}
    (a)~Reward, (b)~entropy, and (c)~polarity weights $\omega$.
    The adaptive weights consistently favor the contracting branch, reflecting the more constrained action space of structured tool-call generation.}
    \label{fig:toolcall_curves}
\end{figure*}

\paragraph{Qwen2.5-14B-Instruct (Agentic).}
Figure~\ref{fig:toolcall_curves} presents the training dynamics for the agentic setting.
Unlike math reasoning, the adaptive weights consistently favor the contracting branch ($\omega_{\mathrm{neg}} > \omega_{\mathrm{pos}}$), reflecting the more constrained action space of structured tool-call generation where exploitation is naturally preferred.

\subsection{Detailed Out-of-Domain Results}
\label{sec:appendix_ood}

Table~\ref{tab:ood_detailed} reports per-benchmark OOD results for all methods at both scales.
All models are trained exclusively on DAPO-Math-17k and evaluated zero-shot on code reasoning, instruction following, and general knowledge benchmarks.

\end{document}